\newtheorem{theorem}{Theorem}
\newtheorem{lemma}{Lemma}
\theoremstyle{remark}
\newtheorem{assumption}{Assumption}
\newtheorem{remark}{Remark}
\definecolor{yes}{rgb}{0.0, 0.5, 0.0}
\DeclareMathOperator{\Ex}{\ensuremath{\mathbb{E}}}
\newcommand{\mc}{\mathcal}
\newcommand{\eps}{\varepsilon}
\renewcommand{\d}{\mathrm{d}}
\newcommand{\KL}{\mathrm{D_{KL}}}
\newcommand{\R}{\mathbb{R}}
\DeclareMathOperator{\Div}{div}
\DeclareMathOperator{\MSE}{MSE}
\DeclareMathOperator{\N}{\mathcal{N}}
\newlist{myitemize}{itemize}{1}
\setlist[myitemize,1]{label=-,leftmargin=0.3in}
\title{Flow Matching for Scalable Simulation-Based Inference}
\author{%
  Maximilian Dax\thanks{Equal contribution} \\
  Max Planck Institute for Intelligent Systems\\
  Tübingen, Germany \\
\texttt{maximilian.dax@tuebingen.mpg.de} \\
  \And
  Jonas Wildberger\footnotemark[1] \\
  Max Planck Institute for Intelligent Systems\\
  Tübingen, Germany \\
\texttt{wildberger.jonas@tuebingen.mpg.de} \\
  \And
  Simon Buchholz\footnotemark[1] \\
  Max Planck Institute for Intelligent Systems\\
  Tübingen, Germany\\
\texttt{sbuchholz@tue.mpg.de} \\
  \And 
  Stephen R.~Green \\
  \hspace*{1.2cm}University of Nottingham\hspace*{1.2cm} \\ Nottingham, United Kingdom  \\    
  \And
  Jakob H.~Macke\\
  Max Planck Institute for Intelligent Systems \&\\
  Machine Learning in Science, \\University of Tübingen\\
  Tübingen, Germany \\  
  \And
  Bernhard Schölkopf\\
  Max Planck Institute for Intelligent Systems\\
  Tübingen, Germany \\
}
\begin{document}

\maketitle

\begin{abstract}
    Neural posterior estimation methods based on discrete normalizing flows have become established tools for simulation-based inference (SBI), but scaling them to high-dimensional problems can be challenging. Building on recent advances in generative modeling, we here present flow matching posterior estimation (FMPE), a technique for SBI using continuous normalizing flows. Like diffusion models, and in contrast to discrete flows, flow matching allows for unconstrained architectures, providing enhanced flexibility for complex data modalities. Flow matching, therefore, enables exact density evaluation, fast training, and seamless scalability to large architectures---making it ideal for SBI. We show that FMPE achieves competitive performance on an established SBI benchmark, and then demonstrate its improved scalability on a challenging scientific problem: for gravitational-wave inference, FMPE outperforms methods based on comparable discrete flows, reducing training time by 30\% with substantially improved accuracy. Our work underscores the potential of FMPE to enhance performance in challenging inference scenarios, thereby paving the way for more advanced applications to scientific problems.
\end{abstract}

\section{Introduction}
\label{sec:introduction}

The ability to readily represent Bayesian posteriors of arbitrary complexity using neural networks would herald a revolution in scientific data analysis. Such networks could be trained using simulated data and used for amortized inference across observations---bringing tractable inference and speed to a myriad of scientific models. Thanks to innovative architectures such as normalizing flows~\cite{rezende2015variational,papamakarios2019normalizing}, approaches to neural simulation-based inference (SBI)~\cite{Cranmer:2019eaq} have seen remarkable progress in recent years. Here, we show that modern approaches to deep generative modeling (particularly flow matching) deliver substantial improvements in simplicity, flexibility and scaling when adapted to SBI. 

The Bayesian approach to data analysis is to compare observations to models via the posterior distribution $p(\theta|x)$. This gives our degree of belief that model parameters $\theta$ gave rise to an observation $x$, and is proportional to the model likelihood $p(x|\theta)$ times the prior $p(\theta)$. One is typically interested in representing the posterior in terms of a collection of samples, however obtaining these through standard likelihood-based algorithms can be challenging for intractable or expensive likelihoods. In such cases, SBI offers an alternative based instead on \emph{data simulations} $x \sim p(x|\theta)$. Combined with deep generative modeling, SBI becomes a powerful paradigm for scientific inference~\cite{Cranmer:2019eaq}. Neural posterior estimation (NPE)~\citep{papamakarios2016fast,lueckmann2017flexible,greenberg2019automatic}, for instance, trains a conditional density estimator $q(\theta|x)$ to approximate the posterior, allowing for rapid sampling and density estimation for any $x$ consistent with the training distribution.

The NPE density estimator $q(\theta|x)$ is commonly taken to be a (discrete) normalizing flow~\citep{rezende2015variational,papamakarios2019normalizing}, an approach that has brought state-of-the-art performance in challenging problems such as gravitational-wave inference~\cite{Dax:2021tsq}. Naturally, performance hinges on the expressiveness of $q(\theta|x)$. Normalizing flows transform noise to samples through a discrete sequence of basic transforms. These have been carefully engineered to be invertible with simple Jacobian determinant, enabling efficient maximum likelihood training, while at the same time producing expressive $q(\theta|x)$. Although many such discrete flows are universal density approximators~\cite{papamakarios2019normalizing}, in practice, they can be challenging to scale to very large networks, which are needed for big-data experiments.

\begin{figure}
    \includegraphics[width=0.9\textwidth]{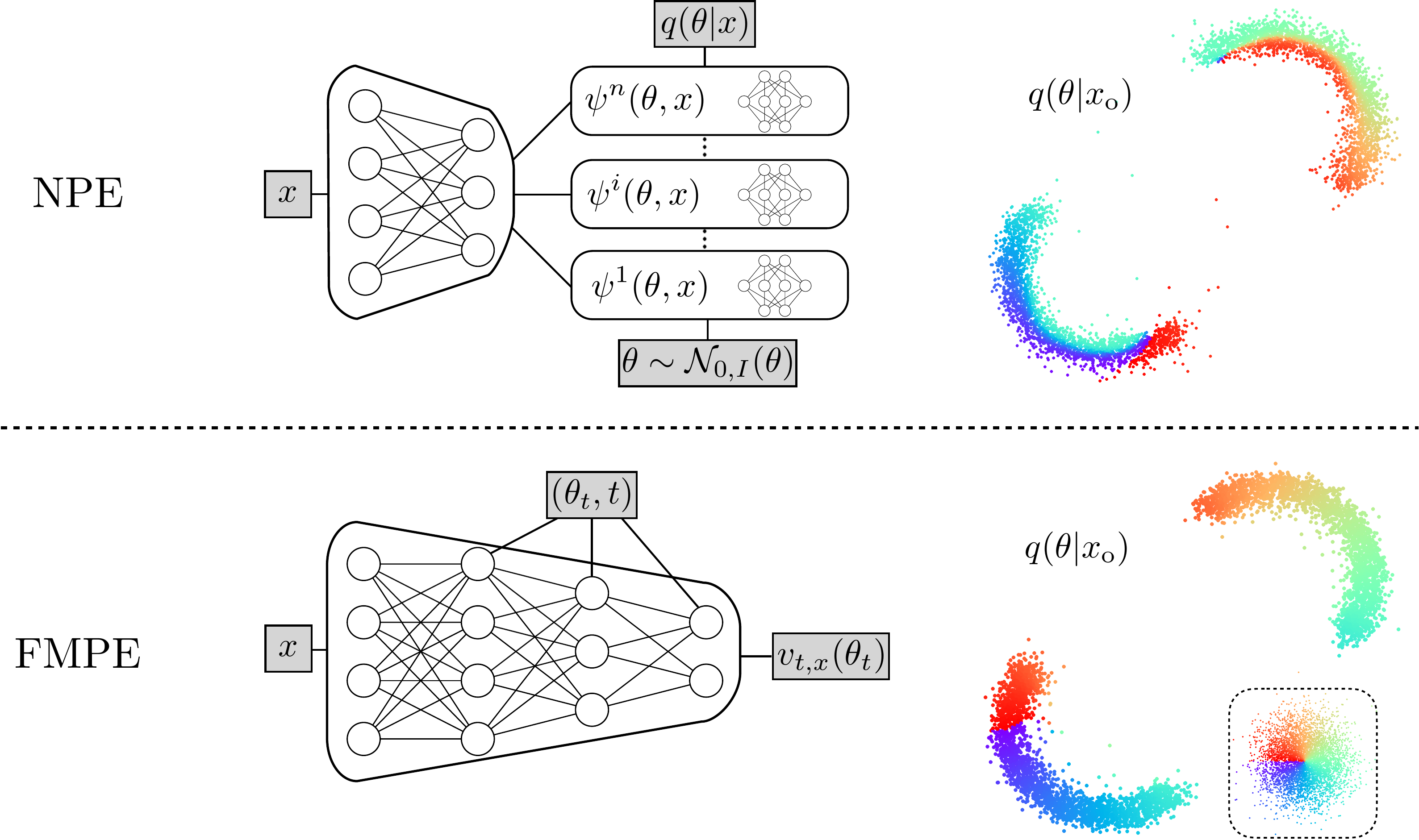}
    \caption{\label{fig:intro-flows}
    Comparison of network architectures (left) and flow trajectories (right). Discrete flows (NPE, top) require a specialized architecture for the density estimator. Continuous flows (FMPE, bottom) are based on a vector field parametrized with an unconstrained architecture. FMPE uses this additional flexibility to put an enhanced emphasis on the conditioning data $x$, which in the SBI context is typically high dimensional and in a complex domain. Further, the optimal transport path produces simple flow trajectories from the base distribution (inset) to the target.
    }
\end{figure}

Recent studies~\cite{sharrock2022sequential,geffner2022score} propose neural posterior score estimation (NPSE), a rather different approach that models the posterior distribution with score-matching (or diffusion) networks. These techniques were originally developed for generative modeling~\cite{sohl2015deep,song2019generative,ho2020denoising}, achieving state-of-the-art results in many domains, including image generation~\cite{NEURIPS2021_49ad23d1,ho2022cascaded}.  Like discrete normalizing flows, diffusion models transform noise into samples, but with trajectories parametrized by a \emph{continuous} ``time'' parameter $t$. The trajectories solve a stochastic differential equation~\cite{song2020score} (SDE) defined in terms of a vector field $v_t$, which is trained to match the score of the intermediate distributions $p_t$. NPSE has several advantages compared to NPE, including the ability to combine multiple observations at inference time~\cite{geffner2022score} and, importantly, the freedom to use unconstrained network architectures.

We here propose to use flow matching, another recent technique for generative modeling, for Bayesian inference, an approach we refer to as flow-matching posterior estimation (FMPE). Flow matching is also based on a vector field $v_t$ and thereby also admits flexible network architectures (Fig.~\ref{fig:intro-flows}). For flow matching, however, $v_t$ directly defines the velocity field of sample trajectories, which solve ordinary differential equations (ODEs) and are deterministic. As a consequence, flow matching allows for additional freedom in designing non-diffusion paths such as optimal transport, and provides direct access to the density~\cite{flow_matching}. These differences are summarized in Tab.~\ref{tab:comparison}.

\begin{table}[h]
    \centering
    \caption{Comparison of posterior-estimation methods.}
    \label{tab:comparison}
    \begin{tabular}{@{}lccc@{}}
    \toprule
        & NPE & NPSE & \multicolumn{1}{c}{\textbf{FMPE (Ours)}} \\
    \midrule
    Tractable posterior density & \textcolor{yes}{Yes} & \textcolor{red}{No} & \textcolor{yes}{Yes} \\ 
    Unconstrained network architecture  & \textcolor{red}{No} & \textcolor{yes}{Yes} & \textcolor{yes}{Yes} \\
    Network passes for sampling & \textcolor{yes}{Single} & \textcolor{orange}{Many} & \textcolor{orange}{Many} \\ 
    
    \bottomrule
    \end{tabular}
\end{table}

Our contributions are as follows:
\begin{itemize}
\item We adapt flow-matching to Bayesian inference, proposing FMPE. In general, the modeling requirements of SBI are different from generative modeling. For the latter, sample quality is critical, i.e., that samples lie in the support of a complex distribution (e.g., images). In contrast, for SBI, $p(\theta|x)$ is typically less complex for fixed $x$, but $x$ itself can be complex and high-dimensional. We therefore consider pyramid-like architectures from $x$ to $v_t$, with gated linear units to incorporate $(\theta, t)$ dependence, rather than the typical U-Net used for images (Fig.~\ref{fig:intro-flows}). We also propose an alternative $t$-weighting in the loss, which improves performance in many tasks. 

\item Under certain regularity assumptions, we prove an upper bound on the KL divergence between the model and target posterior. This implies that estimated posteriors are mass-covering, i.e., that their support includes all $\theta$ consistent with observed $x$, which is highly desirable for scientific applications \cite{hermans2021averting}.

\item We perform a number of experiments to investigate the performance of FMPE.\footnote{Code available \href{https://github.com/dingo-gw/flow-matching-posterior-estimation}{here}.} Our two-pronged approach, which involves a set of benchmark tests and a real-world problem, is designed to probe complementary aspects of the method, covering breadth and depth of applications. First, on an established suite of SBI benchmarks, we show that FMPE performs comparably---or better---than NPE across most tasks, and in particular exhibits mass-covering posteriors in all cases (Sec.~\ref{sec:sbibm}). We then push the performance limits of FMPE on a challenging real-world problem by turning to gravitational-wave inference (Sec.~\ref{sec:gw}). We show that FMPE substantially outperforms an NPE baseline in terms of training time, posterior accuracy, and scaling to larger networks. 
\end{itemize}

\section{Preliminaries}
\label{sec:preliminaries}

\paragraph{Normalizing flows.} A normalizing flow~\cite{rezende2015variational,papamakarios2019normalizing} defines a probability distribution $q(\theta|x)$ over parameters $\theta \in \mathbb{R}^n$ in terms of an invertible mapping $\psi_x: \mathbb{R}^n \to \mathbb{R}^n$ from a simple base distribution $q_0(\theta)$,
\begin{equation}\label{eq:normalizing-flow}
  q(\theta | x) = (\psi_x)_\ast q_0(\theta) = q_0( \psi_x^{-1}(\theta)) \det \left| \frac{\partial \psi_x^{-1}(\theta)}{\partial \theta} \right|\,,
\end{equation}
where $(\cdot)_\ast$ denotes the pushforward operator, and for generality we have conditioned on additional context $x \in \mathbb{R}^m$. Unless otherwise specified, a normalizing flow refers to a \emph{discrete} flow, where $\psi_x$ is given by a composition of simpler mappings with triangular Jacobians, interspersed with shuffling of the $\theta$. This construction results in expressive $q(\theta|x)$ and also efficient density evaluation \cite{papamakarios2019normalizing}. %

\paragraph{Continuous normalizing flows.} A continuous flow~\cite{neural_ode} also maps from base to target distribution, but is parametrized by a continuous ``time'' $t \in [0,1]$, where $q_0(\theta|x) = q_0(\theta)$ and $q_1(\theta|x) = q(\theta|x)$. For each $t$, the flow is defined by a vector field $v_{t,x}$ on the sample space.\footnote{In the SBI literature, this is also commonly referred to as ``parameter space''.} This corresponds to the velocity of the sample trajectories,
\begin{equation}\label{eq:ODE}
  \frac{d}{dt} \psi_{t, x} (\theta) = v_{t, x}(\psi_{t, x}(\theta)),\qquad \psi_{0, x} (\theta) = \theta.
\end{equation}
We obtain the trajectories $\theta_t \equiv \psi_{t,x}(\theta)$ by integrating this ODE. The final density is given by
\begin{equation}\label{eq:cnf-density}
  q(\theta|x) = (\psi_{1, x})_\ast q_0(\theta) = q_0(\theta) \exp\left( - \int_0^1 \Div v_{t,x} (\theta_t) \,\mathrm{d} t \right),
\end{equation}
which is obtained by solving the transport equation $\partial_t q_t + \Div(q_t v_{t,x}) = 0$.

The advantage of the continuous flow is that $v_{t,x}(\theta)$ can be simply specified by a neural network taking $\mathbb{R}^{n+m+1} \to \mathbb{R}^n$, in which case~\eqref{eq:ODE} is referred to as a \emph{neural ODE}~\cite{neural_ode}. Since the density is tractable via \eqref{eq:cnf-density}, it is in principle possible to train the flow by maximizing the (log-)likelihood. However, this is often not feasible in practice, since both sampling and density estimation require many network passes to numerically solve the ODE \eqref{eq:ODE}.

\paragraph{Flow matching.} An alternative training objective for continuous normalizing flows is provided by flow matching~\cite{flow_matching}. This directly regresses $v_{t,x}$ on a vector field $u_{t,x}$ that generates a target probability path $p_{t,x}$. It has the advantage that training does not require integration of ODEs, however it is not immediately clear how to choose $(u_{t,x},p_{t,x})$. The key insight of~\citep{flow_matching} is that, if the path is chosen on a \emph{sample-conditional} basis,\footnote{We refer to conditioning on $\theta_1$ as \emph{sample}-conditioning to distinguish from conditioning on $x$.} then the training objective becomes extremely simple. Indeed, given a sample-conditional probability path $p_t(\theta|\theta_1)$ and a corresponding vector field $u_t(\theta|\theta_1)$, we specify the sample-conditional flow matching loss as
\begin{equation}\label{eq:flow-matching}
  \mathcal{L}_\text{SCFM} = \Ex_{t\sim \mathcal{U}[0,1],\,x\sim p(x),\,\theta_1\sim p(\theta|x),\,\theta_t\sim p_t(\theta_t | \theta_1)} \left\| v_{t,x}(\theta_t) - u_t(\theta_t| \theta_1)\right\|^2.
\end{equation}
Remarkably, minimization of this loss is equivalent to regressing $v_{t,x}(\theta)$ on the \emph{marginal} vector 
field $u_{t,x}(\theta)$ 
that generates $p_t(\theta|x)$~\citep{flow_matching}. Note that in this expression, the $x$-dependence of $v_{t,x}(\theta)$ is picked up via the expectation value, with the sample-conditional vector field independent of $x$.

There exists considerable freedom in choosing a sample-conditional path. Ref.~\citep{flow_matching} introduces the family of Gaussian paths
\begin{equation}\label{eq:sample-conditional-path}
    p_t(\theta|\theta_1) = \N(\theta|\mu_t(\theta_1),\sigma_t(\theta_1)^2I_n),
\end{equation}
where the time-dependent means $\mu_t(\theta_1)$ and standard deviations $\sigma_t(\theta_1)$ can be freely specified (subject to boundary conditions\footnote{The sample-conditional probability path should be chosen to be concentrated around $\theta_1$ at $t=1$ (within a small region of size $\sigma_{\text{min}}$) and to be the base distribution at $t=0$.}). For our experiments, we focus on the optimal transport paths defined by $\mu_t(\theta_1)=t\theta_1$ and $\sigma_t(\theta_1)=1-(1-\sigma_{\min})t$ (also introduced in~\citep{flow_matching}). The sample-conditional vector field then has the simple form
\begin{equation}\label{eq:sample-conditional-vector-field}
    u_t(\theta|\theta_1) = \frac{\theta_1 - (1 - \sigma_\text{min})\theta}{1 - (1 - \sigma_\text{min})t}.
\end{equation}

\paragraph{Neural posterior estimation (NPE).} NPE is an SBI method that directly fits a density estimator $q(\theta|x)$ (usually a normalizing flow) to the posterior $p(\theta|x)$~\cite{papamakarios2016fast,lueckmann2017flexible,greenberg2019automatic}. NPE trains with the maximum likelihood objective $\mathcal{L}_\text{NPE} = -\Ex_{p(\theta)p(x|\theta)}\log q(\theta|x)$, using Bayes' theorem to simplify the expectation value with $\Ex_{p(x)p(\theta|x)} \to \Ex_{p(\theta)p(x|\theta)}$. During training, $\mathcal{L}_\text{NPE}$ is estimated based on an empirical distribution consisting of samples $(\theta,x)\sim p(\theta)p(x|\theta)$. Once trained, NPE can perform inference for every new observation using $q(\theta|x)$, thereby \emph{amortizing} the computational cost of simulation and training across all observations. NPE further provides exact density evaluations of $q(\theta|x)$. Both of these properties are crucial for the physics application in section~\ref{sec:gw}, so we aim to retain these properties with FMPE.

\subsection*{Related work}
Flow matching~\cite{flow_matching} has been developed as a technique for generative modeling, and similar techniques are discussed in~\cite{albergo2022building,liu2022flow,neklyudov2022action} and extended in \cite{davtyan2022randomized,tong2023conditional}. Flow matching encompasses the deterministic ODE version of diffusion models~\cite{sohl2015deep,song2019generative,ho2020denoising} as a special instance. Although to our knowledge flow matching has not previously been applied to Bayesian inference, score-matching diffusion models have been proposed for SBI in~\cite{sharrock2022sequential,geffner2022score} with impressive results. These studies, however, use stochastic formulations via SDEs~\cite{song2020score} or Langevin steps and are therefore not directly applicable when evaluations of the posterior density are desired (see Tab.~\ref{tab:comparison}). It should be noted that score modeling can also be used to parameterize continuous normalizing flows via an ODE. Extension of~\cite{sharrock2022sequential,geffner2022score} to the deterministic formulation could thereby be seen as a special case of flow matching. Many of our analyses and the practical guidance provided in Section~\ref{sec:fmpe} therefore also apply to score matching.

We here focus on comparisons of FMPE against NPE~\cite{papamakarios2016fast,lueckmann2017flexible,greenberg2019automatic}, as it best matches the requirements of the application in section~\ref{sec:gw}. Other SBI methods include approximate Bayesian computation~\cite{sisson2018overview,beaumont2002approximate,beaumont2009adaptive,blum2010non,prangle2013semiautomatic}, neural likelihood estimation~\cite{wood2010statistical,drovandi2018approximating,papamakarios2019sequential,lueckmann2019likelihood} and neural ratio estimation~\cite{izbicki2014high,pham2014a,cranmer2015approximating,hermans2019likelihood,durkan2020contrastive,thomas2020likelihoodfree,miller2022contrastive}. Many of these approaches have sequential versions, where the estimator networks are specifically tuned to a specific observation $x_\text{o}$. FMPE has a tractable density, so it is straightforward to apply the sequential NPE~\cite{papamakarios2016fast,lueckmann2017flexible,greenberg2019automatic} approaches to FMPE. In this case, inference is no longer amortized, so we leave this extension to future work.

\section{Flow matching posterior estimation}
\label{sec:fmpe}

To apply flow matching to SBI we use Bayes' theorem to make the usual replacement $\Ex_{p(x)p(\theta|x)} \to \Ex_{p(\theta)p(x|\theta)}$ in the loss function \eqref{eq:flow-matching}, eliminating the intractable expectation values. This gives the FMPE loss
\begin{equation}\label{eq:fmpe-loss}
\mathcal{L}_\text{FMPE} = \Ex_{t\sim p(t),\,\theta_1\sim p(\theta), x\sim p(x|\theta_1), \,\theta_t\sim p_{t}(\theta_t | \theta_1)} \left\| v_{t,x}(\theta_t) - u_{t}(\theta_t| \theta_1)\right\|^2,
\end{equation}

which we minimize using empirical risk minimization over samples $(\theta,x)\sim p(\theta)p(x|\theta)$. In other words, training data is generated by sampling $\theta$ from the prior, and then simulating data $x$ corresponding to $\theta$. This is in close analogy to NPE training, but replaces the log likelihood maximization with the sample-conditional flow matching objective. Note that in this expression we also sample $t \sim p(t)$, $t\in [0,1]$ (see Sec.~\ref{subsec:t-weighting}), which generalizes the uniform distribution in~\eqref{eq:flow-matching}. This provides additional freedom to improve learning in our experiments.

\subsection{Probability mass coverage}\label{subsec:mass-coverage}

As we show in our examples, trained FMPE models $q(\theta|x)$ can achieve excellent results in approximating the true posterior $p(\theta|x)$. However, it is not generally possible to achieve \emph{exact} agreement due to limitations in training budget and network capacity. It is therefore important to understand how inaccuracies manifest. Whereas sample quality is the main criterion for generative modeling, for scientific applications one is often interested in the overall shape of the distribution. In particular, an important question is whether $q(\theta|x)$ is \emph{mass-covering}, i.e., whether it contains the full support of $p(\theta|x)$. This minimizes the risk to falsely rule out possible explanations of the data. It also allows us to use importance sampling if the likelihood $p(x|\theta)$ of the forward model can be evaluated, which can be used for precise estimation of the posterior~\cite{muller2019neural,Dax:2022pxd}.

\begin{wrapfigure}{r}{0.40\textwidth}
  \vspace{-5pt}
  \centering
  \includegraphics[width=0.98\textwidth]{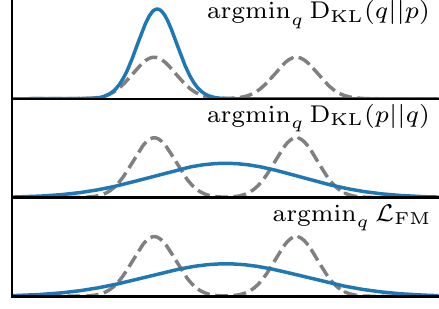}
  \vspace{-3pt}
  \caption{\label{fig:gaussian-fm-mass-coverage}
  A Gaussian (blue) fitted to a bimodal distribution (gray) with various objectives.
  }
  \vspace{-5pt}
\end{wrapfigure}
Consider first the mass-covering property for NPE. NPE directly minimizes the forward KL divergence $\KL(p(\theta|x)||q(\theta|x))$, and thereby provides probability-mass covering results. Therefore, even if NPE is not accurately trained, the estimate  $q(\theta|x)$ should cover the entire support of the posterior $p(\theta|x)$ and the failure to do so can be observed in the validation loss. As an illustration in an unconditional setting, we observe that a unimodal Gaussian $q$ fitted to a bimodal target distribution $p$ captures both modes when using the forward KL divergence $\KL(p||q)$, but only a single mode when using the backwards direction $\KL(q||p)$ (Fig.~\ref{fig:gaussian-fm-mass-coverage}).

For FMPE, we can fit a Gaussian flow-matching model $q(\theta)=\N(\hat\mu,\hat\sigma^2)$ to the same bimodal target, in this case, parametrizing the vector field as 
\begin{equation}\label{eq:ut-for-gaussian-flow}
v_t(\theta) = \frac{(\sigma_t^2 + (t\hat\sigma)^2 -\sigma_t)\theta_t + t\hat\mu\cdot\sigma_t}{t\cdot(\sigma_t^2 + (t\hat\sigma)^2)}    
\end{equation}
(see Appendix~\ref{sec:appendix-gaussian-flow}), we also obtain a mass-covering distribution when fitting the learnable parameters $(\hat\mu,\hat\sigma)$ via \eqref{eq:flow-matching}. %
This provides some indication that the flow matching objective induces mass-covering behavior, and leads us to investigate the more general question of whether the mean squared error between vector fields $u_t$
 and $v_t$  bounds the forward KL divergence. Indeed, the former agrees up to constant with the sample-conditional loss \eqref{eq:flow-matching} (see Sec.~\ref{sec:preliminaries}).

We  denote the flows of $u_t$, $v_t$, by $\phi_t$, $\psi_t$, respectively, and we set $q_t=(\psi_t)_\ast q_0$, $p_t=(\phi_t)_\ast q_0$. The precise question then is whether we can bound $\KL(p_1||q_1)$ by $\MSE_{p}(u, v)^\alpha$ for some positive power $\alpha$. It was already observed in \cite{albergo2023stochastic} that this is not true in general, and we provide a simple example to that effect in Lemma~\ref{le:holes} in Appendix~\ref{sec:appendix-mass-covering-fm}. Indeed, it was found in \cite{albergo2023stochastic} that to bound the forward KL divergence we also need to control the Fisher divergence, 
 $\int p_t(\d \theta) (\nabla \ln p_t(\theta)-\nabla q_t(\theta))^2$.
 
 Here we show instead that a bound can be obtained under sufficiently strong regularity assumptions on $p_0$, $u_t$, and $v_t$. The following statement is slightly informal, and we refer to the supplement for the complete version.
 
\begin{theorem}\label{theorem:mass-coverage}
Let $p_0=q_0$ and assume $u_t$ and $v_t$ are two vector fields
whose flows satisfy $p_1=(\phi_1)_\ast p_0$
and $q_1=(\psi_1)_\ast q_0$. Assume that $p_0$ is square integrable and satisfies
$|\nabla \ln p_0(\theta)|\leq c(1+|\theta|)$ and $u_t$ and $v_t$ have bounded second derivatives. Then there is a constant $C>0$
such that (for $\MSE_p(u,v)<1)$)
\begin{align}
    \KL(p_1||q_1)\leq C \MSE_{p}(u,v)^{\frac12}.
\end{align}
\end{theorem}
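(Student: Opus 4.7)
The overall strategy is to differentiate $\KL(p_t||q_t)$ along the two flows, reduce matters to a relative Fisher-information quantity via two applications of Cauchy--Schwarz, and then bound that quantity by a constant using ODE estimates on the flow maps.

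First, set $H(t):=\KL(p_t||q_t)$, write $H(t)=\int p_t\log(p_t/q_t)\,d\theta$, and differentiate under the integral. Using the continuity equations $\partial_t p_t=-\Div(p_t u_t)$ and $\partial_t q_t=-\Div(q_t v_t)$ and integrating by parts (boundary terms vanish because $p_0$ is integrable with enough decay, and the Lipschitz flows preserve this decay), the term $\int\partial_t p_t\,d\theta$ drops and one obtains the standard identity
\begin{equation*}
    \frac{d}{dt}\KL(p_t||q_t) \;=\; \int p_t\,(u_t-v_t)\cdot\nabla\log\frac{p_t}{q_t}\,d\theta.
\end{equation*}
One Cauchy--Schwarz in $\theta$ and a second in $t\in[0,1]$, combined with $H(0)=0$, then give
\begin{equation*}
    \KL(p_1||q_1) \;\le\; \MSE_p(u,v)^{1/2}\cdot\mathcal{I}^{1/2},\qquad \mathcal{I}:=\int_0^1\!\!\int p_t\,\bigl|\nabla\log(p_t/q_t)\bigr|^2\,d\theta\,dt.
\end{equation*}
So it suffices to show $\mathcal{I}\le C$ for a constant depending only on $p_0$, $u$, and $v$.

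For that bound, split $|\nabla\log(p_t/q_t)|^2\le 2|\nabla\log p_t|^2+2|\nabla\log q_t|^2$ and use the pushforward formula. Since $p_t=(\phi_t)_\ast p_0$, differentiating $\log p_t(\phi_t(\theta_0))=\log p_0(\theta_0)-\log|\det D\phi_t(\theta_0)|$ in $\theta_0$ gives
\begin{equation*}
    (\nabla\log p_t)\bigl(\phi_t(\theta_0)\bigr) \;=\; D\phi_t(\theta_0)^{-\top}\bigl[\nabla\log p_0(\theta_0)-\nabla_{\theta_0}\log|\det D\phi_t(\theta_0)|\bigr].
\end{equation*}
Bounded second derivatives of $u_t$ feed into the variational equation for $D\phi_t$, and Grönwall yields uniform (in $t\in[0,1]$) bounds on $\|D\phi_t\|$, $\|D\phi_t^{-1}\|$, and on $|\nabla_{\theta_0}\log|\det D\phi_t||$. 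Combined with the hypothesis $|\nabla\log p_0|(\theta_0)\le c(1+|\theta_0|)$ and the assumed integrability of $p_0$, change of variables then bounds $\int p_t|\nabla\log p_t|^2\,d\theta$ uniformly in $t$. The term $\int p_t|\nabla\log q_t|^2\,d\theta$ is handled identically: $\nabla\log q_t$ is evaluated at $\psi_t^{-1}(\phi_t(\theta_0))$, and since $\psi_t^{-1}\!\circ\phi_t$ is Lipschitz in $\theta_0$ with constant independent of $t$, the same linear-growth chain applies.

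The main obstacle is this final step: organizing the Grönwall estimates so that the interaction between $\nabla\log p_0$, $D\phi_t^{-1}$, and the log-determinant gradient compresses into a single integrand of the form $C(1+|\theta_0|^p)$ integrable against $p_0$, uniformly in $t$. The $\tfrac12$ exponent on $\MSE_p(u,v)$ is intrinsic to the double Cauchy--Schwarz approach, and the side condition $\MSE_p(u,v)<1$ presumably serves only to absorb lower-order terms into the informal statement rather than reflecting a genuine obstruction.
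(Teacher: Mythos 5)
Your overall skeleton matches the paper's proof: the same identity for $\partial_t \KL(p_t||q_t)$, Cauchy--Schwarz in $\theta$ and then in $t$, and the score bound for $p_t$ via the pushforward along $\phi_t$ (bounded $\nabla u_t$, $\nabla^2 u_t$ give $e^L$ and $L'e^L$ factors, and the change of variables $p_t=(\phi_t)_\ast p_0$ reduces everything to $\Ex_{p_0}|\nabla\ln p_0|^2$, which the linear-growth and square-integrability hypotheses control). Up to and including the bound on $\int p_t\,|\nabla\log p_t|^2$, your argument is sound and essentially identical to the paper's.

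The gap is in the step you yourself flag as the main obstacle, and your proposed resolution does not close it. For $\int p_t\,|\nabla\log q_t|^2$ you must control $|\nabla\log q_0(\psi_t^{-1}\phi_t(\theta_0))|\le C_1\bigl(1+|\psi_t^{-1}\phi_t(\theta_0)|\bigr)$ in $L^2(p_0)$, and Lipschitz continuity of $\psi_t^{-1}\circ\phi_t$ (true, with constant $e^{2L}$) only controls differences, not where points land: $|\psi_t^{-1}\phi_t(\theta_0)|\le |\psi_t^{-1}\phi_t(0)|+e^{2L}|\theta_0|$, and the anchor term $|\psi_t^{-1}\phi_t(0)|$ is not bounded by the allowed constants, since the assumptions bound only derivatives of $u_t,v_t$, not their magnitudes (take $u\equiv 0$, $v\equiv M$ constant: all derivative bounds hold with $L=L'=0$, yet the displacement is of size $|M|$, so no bound $\mathcal{I}\le C(L,L',C_1,C_2,d)$ can hold). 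The paper's key device, which your sketch is missing, is the forward--backward identity $\partial_t\bigl(\psi^t_0\phi_t\bigr)(\theta_0)=(\nabla\psi^t_0)(\phi_t(\theta_0))\,(u_t-v_t)(\phi_t(\theta_0))$, which gives $|\psi^t_0\phi_t(\theta_0)-\theta_0|\le e^L\int_0^t|u_s-v_s|(\phi_s(\theta_0))\,\d s$ and hence, after Cauchy--Schwarz in $s$ and the change of variables to $p_s$, an $L^2(p_0)$ bound by $e^{2L}\MSE_p(u,v)$. This yields $\mathcal{I}\le C\bigl(1+\MSE_p(u,v)\bigr)$ rather than your claimed uniform $\mathcal{I}\le C$, and it is exactly here that the hypothesis $\MSE_p(u,v)<1$ is used to absorb the extra factor --- so that side condition is not merely cosmetic in your framework; it is what replaces the missing uniform bound. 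With this displacement estimate inserted, the rest of your outline goes through and coincides with the paper's argument.
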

\textit{The proof of this result can be found in appendix~\ref{sec:appendix-mass-covering-fm}.}
While the regularity assumptions are not guaranteed to hold in practice when $v_t$ is parametrized by a neural net, the theorem nevertheless gives some indication that the flow-matching objective encourages mass coverage. In Section~\ref{sec:sbibm} and~\ref{sec:gw}, this is complemented with extensive empirical evidence that flow matching indeed provides mass-covering estimates.

We remark that it was shown in \cite{song2021maximum} that the KL divergence of SDE solutions can be bounded by the MSE of the estimated score function. Thus, the smoothing effect of the noise ensures mass coverage, an aspect that was further studied using the Fokker-Planck equation in \cite{albergo2023stochastic}. For flow matching, imposing the regularity assumption plays a similar role.

\subsection{Network architecture}\label{subsec:fmpe-arch}
Generative diffusion or flow matching models typically operate on complicated and high dimensional data in the $\theta$ space (e.g., images with millions of pixels). One typically uses U-Net~\cite{ronneberger2015u} like architectures, as they provide a natural mapping from $\theta$ to a vector field $v(\theta)$ of the same dimension. The dependence on $t$ and an (optional) conditioning vector $x$ is then added on top of this architecture.

For SBI, the data $x$ is often associated with a complicated domain, such as image or time series data, whereas parameters $\theta$ are typically low dimensional. In this context, it is therefore useful to build the architecture starting as a mapping from $x$ to $v(x)$ and then add conditioning on $\theta$ and $t$. In practice, one can therefore use any established feature extraction architecture for data in the domain of $x$, and adjust the dimension of the feature vector to $n = \dim(\theta)$. In our experiments, we found that the $(t,\theta)$-conditioning is best achieved using gated linear units~\cite{dauphin2017language} to the hidden layers of the network (see also Fig.~\ref{fig:intro-flows}); these are also commonly used for conditioning discrete flows on $x$.

\subsection{Re-scaling the time prior}\label{subsec:t-weighting}
The time prior $\mathcal{U}[0,1]$ in~\eqref{eq:flow-matching} distributes the training capacity uniformly across $t$. We observed that this is not always optimal in practice, as the complexity of the vector field may depend on $t$. For FMPE we therefore sample $t$ in \eqref{eq:fmpe-loss} from a power-law distribution $p_\alpha(t) \propto t^{1 / (1 + \alpha)},~t\in[0,1]$, introducing an additional hyperparameter $\alpha$. This includes the uniform distribution for $\alpha=0$, but for $\alpha>0$, assigns greater importance to the vector field for larger values of $t$. We empirically found this to improve  learning for distributions with sharp bounds (e.g., Two Moons in Section~\ref{sec:sbibm}).

\section{SBI benchmark}
\label{sec:sbibm}
We now evaluate FMPE on ten tasks included in the benchmark presented in~\cite{lueckmann2021benchmarking},  ranging from simple Gaussian toy models to more challenging SBI problems from epidemiology and ecology, with varying dimensions for parameters ($\dim(\theta)\in[2,10]$) and observations ($\dim(x)\in[2,100]$). For each task, we train three separate FMPE models with simulation budgets $N\in\{10^3, 10^4, 10^5\}$. We use a simple network architecture consisting of fully connected residual blocks~\cite{he2015deep} to parameterize the conditional vector field. For the two tasks with $\dim (x) = 100$ (B-GLM-Raw, SLCP-D), we condition on $(t,\theta)$ via gated linear units as described in Section~\ref{subsec:fmpe-arch} (Fig.~\ref{fig:benchmark-theta-embedding} in Appendix~\ref{sec:appendix-sbibm} shows the corresponding performance gain). For the remaining tasks with $\dim(x)\leq 10$ we concatenate $(t,\theta,x)$ instead. We reserve 5\% of the simulations for validation. See Appendix~\ref{sec:appendix-sbibm} for details.

For each task and simulation budget, we evaluate the model with the lowest validation loss by comparing $q(\theta|x)$ to the reference posteriors $p(\theta|x)$ provided in~\cite{lueckmann2021benchmarking} for ten different observations $x$ in terms of the C2ST score~\cite{friedman2003multivariate,lopez2016revisiting}. This performance metric is computed by training a classifier to discriminate inferred samples $\theta\sim q(\theta|x)$ from reference samples $\theta\sim p(\theta|x)$. The C2ST score is then the test accuracy of this classifier, ranging from 0.5 (best) to 1.0.  We observe that FMPE exhibits comparable performance to an NPE baseline model for most tasks and outperforms on several (Fig.~\ref{fig:benchmark-c2st}). In terms of the MMD metric (Fig.~\ref{fig:benchmark-mmd} in the Appendix), FMPE clearly outperforms NPE (but MMD can be sensitive to its hyperparameters~\cite{lueckmann2021benchmarking}). As NPE is one of the highest ranking methods for many tasks in the benchmark, these results show that FMPE indeed performs competitively with other existing SBI methods. We report an additional baseline for score matching in Fig.~\ref{fig:benchmark-SMPE} in the Appendix.

As NPE and FMPE both directly target the posterior with a density estimator (in contrast to most other SBI methods), observed differences can be primarily attributed to their different approaches for density estimation. Interestingly, a great performance improvement of FMPE over NPE is observed for SLCP with a large simulation budget ($N=10^5$). The SLCP task is specifically designed to have a simple likelihood but a complex posterior, and the FMPE performance underscores the enhanced flexibility of the FMPE density estimator.

\begin{wrapfigure}[16]{r}{0.455\textwidth}
  \vspace{-15pt}
  \centering
  \includegraphics[width=0.98\textwidth]{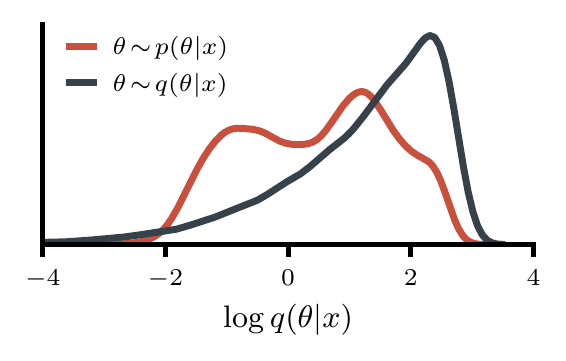}
  \vspace{-5pt}
  \caption{\label{fig:sbibm-logprobs}
  Histogram of FMPE densities $\log q(\theta|x)$ for reference samples $\theta\sim p(\theta|x)$ (Two Moons task, $N=10^3$). The estimate $q(\theta|x)$ clearly covers $p(\theta|x)$ entirely.
  }
\end{wrapfigure}

Finally, we empirically investigate the mass coverage suggested by our theoretical analysis in Section~\ref{subsec:mass-coverage}. We display the density $\log q(\theta|x)$ of the reference samples $\theta\sim p(\theta|x)$ under our FMPE model $q$ as a histogram (Fig.~\ref{fig:sbibm-logprobs}). All samples $\theta\sim p(\theta|x)$ fall into the support from $q(\theta|x)$. This becomes apparent when comparing to the density $\log q(\theta|x)$ for samples $\theta\sim q(\theta|x)$ from $q$ itself. This FMPE result is therefore mass covering. Note that this does not necessarily imply conservative posteriors (which is also not generally true for the forward KL divergence~\cite{hermans2021averting,delaunoy2022reliable,delaunoy2023balancing}), and some parts of $p(\theta|x)$ may still be undersampled. Probability mass coverage, however, implies that no part is entirely missed (compare Fig.~\ref{fig:gaussian-fm-mass-coverage}), even for multimodal distributions such as Two Moons. Fig.~\ref{fig:sbibm-logprobs-all} in the Appendix confirms the mass coverage for the other benchmark tasks.

\section{Gravitational-wave inference}
\label{sec:gw}

\subsection{Background}

Gravitational waves (GWs) are ripples of spacetime predicted by Einstein and produced by cataclysmic cosmic events such as the mergers of binary black holes (BBHs). GWs propagate across the universe to Earth, where the LIGO-Virgo-KAGRA observatories measure faint time-series signals embedded in noise. To-date, roughly 90 detections of merging black holes and neutron stars have been made~\cite{LIGOScientific:2021djp}, all of which have been characterized using Bayesian inference to compare against theoretical models.\footnote{BBH parameters $\theta\in\mathbb{R}^{15}$ include black-hole masses, spins, and the spacetime location and orientation of the system (see Tab.~\ref{tab:GW-parameters-with-priors} in the Appendix). We represent $x$ in frequency domain; for two LIGO detectors and complex $f\in[20,512]~\text{Hz},~\Delta f = 0.125~\text{Hz}$, we have $x\in\mathbb{R}^{15744}$.} These have yielded insights into the origin and evolution of black holes~\citep{LIGOScientific:2020kqk}, fundamental properties of matter and gravity~\citep{LIGOScientific:2018cki,LIGOScientific:2020tif}, and even the expansion rate of the universe~\citep{LIGOScientific:2017adf}.
\begin{figure}
  \vspace{-5pt}
  \centering
  \includegraphics[width=\textwidth]{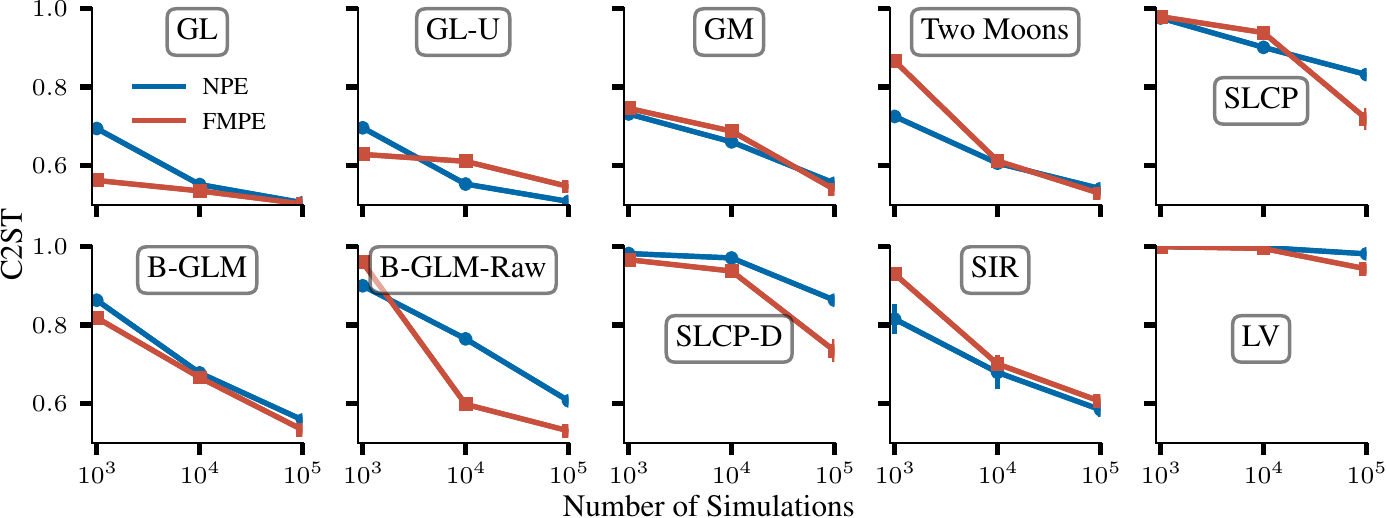}
  \vspace{-5pt}
  \caption{\label{fig:benchmark-c2st}
  Comparison of FMPE with NPE, a standard SBI method, across 10 benchmark tasks  \cite{lueckmann2021benchmarking}. 
  }
\end{figure}
Under reasonable assumptions on detector noise, the GW likelihood is tractable,\footnote{Noise is assumed to be stationary and Gaussian, so for frequency-domain data, the GW likelihood $p(x|\theta) = \mathcal{N}(h(\theta) | S_{\mathrm{n}})(x)$. Here $h(\theta)$ is a theoretical signal model based on Einstein's theory of general relativity, and $S_{\mathrm{n}}$ is the power spectral density of the detector noise.} and inference is typically performed using tools~\cite{Veitch:2014wba,Ashton:2018jfp,Romero-Shaw:2020owr,Speagle_2020} based on Markov chain Monte Carlo~\cite{metropolis1953equation,Hastings:1970} or nested sampling~\cite{Skilling:2006} algorithms. This can take from hours to months, depending on the nature of the event and the complexity of the signal model, with a typical analysis requiring up to $\sim 10^8$ likelihood evaluations. The ever-increasing rate of detections means that these analysis times risk becoming a bottleneck. SBI offers a promising solution for this challenge that has thus been actively studied in the literature~\cite{Cuoco:2020ogp,Gabbard:2019rde,Green:2020hst,Delaunoy:2020zcu,Green:2020dnx,Dax:2021tsq,Dax:2021myb,Chatterjee:2022ggk,Dax:2022pxd}. A fully amortized NPE-based method called DINGO recently achieved accuracies comparable to stochastic samplers with inference times of less than a minute per event~\cite{Dax:2021tsq}. To achieve accurate results, however, DINGO uses group-equivariant NPE~\cite{Dax:2021tsq,Dax:2021myb} (GNPE), an NPE extension that integrates known conditional symmetries. GNPE, therefore, does not provide a tractable density, which is problematic when verifying and correcting inference results using importance sampling~\cite{Dax:2022pxd}. 

\subsection{Experiments}

We here apply FMPE to GW inference. As a baseline, we train an NPE network with the settings described in~\cite{Dax:2021tsq} with a few minor changes (see Appendix~\ref{sec:appendix-gw}).\footnote{Our implementation builds on the public DINGO code from \url{https://github.com/dingo-gw/dingo}.} This uses an embedding network~\cite{radev2020bayesflow} to compress $x$ to a 128-dimensional feature vector, which is then used to condition a neural spline flow~\cite{durkan2019neural}. The embedding network consists of a learnable linear layer initialized with principal components of GW simulations followed by a series of dense residual blocks~\cite{he2015deep}. This architecture is a powerful feature extractor for GW measurements~\cite{Dax:2021tsq}. As pointed out in Section~\ref{subsec:fmpe-arch}, it is straightforward to reuse such architectures for FMPE, with the following three modifications: (1) we provide the conditioning on $(t,\theta)$ to the network via gated linear units in each hidden layer; (2) we change the dimension of the final feature vector to the dimension of $\theta$ so that the network parameterizes the conditional vector field $(t,x,\theta)\rightarrow v_{t,x}(\theta)$; (3) we increase the number and width of the hidden layers to use the capacity freed up by removing the discrete normalizing flow.

We train the NPE and FMPE networks with $5\cdot 10^6$ simulations for 400 epochs using a batch size of 4096 on an A100 GPU. The FMPE network ($1.9\cdot 10^8$ learnable parameters, training takes $\approx 2~\text{days}$) is larger than the NPE network ($1.3\cdot 10^8$ learnable parameters, training takes $\approx 3~\text{days}$), but trains substantially faster. We evaluate both networks on GW150914~\cite{Abbott:2016blz}, the first detected GW. We generate a reference posterior using the method described in~\cite{Dax:2022pxd}. Fig.~\ref{fig:GW150914} compares the inferred posterior distributions qualitatively and quantitatively in terms of the Jensen-Shannon divergence (JSD) to the reference.\footnote{We omit the three parameters $\phi_\text{c},\phi_{JL},\theta_{JN}$ in the evaluation as we use phase marginalization in importance sampling and the reference therefore uses a different basis for these parameters~\cite{Dax:2022pxd}. For GNPE we report the results from~\cite{Dax:2021tsq}, which are generated with slightly different data conditioning. Therefore, we do not display the GNPE results in the corner plot, and the JSDs serve only as a rough comparison. The JSD for the $t_\text{c}$ parameter is not reported in~\cite{Dax:2021tsq} due to a $t_\text{c}$ marginalized reference.} 

FMPE substantially outperforms NPE in terms of accuracy, with a mean JSD of $0.5~\text{mnat}$ (NPE: $3.6~\text{mnat}$), and max JSD $<2.0~\text{mnat}$, an indistinguishability criterion for GW posteriors~\cite{Romero-Shaw:2020owr}. Remarkably, FMPE accuracy is even comparable to GNPE, which leverages physical symmetries to simplify data. Finally, we find that the Bayesian evidences inferred with NPE ($\log p(x) = -7667.958\pm 0.006$) and FMPE ($\log p(x) = -7667.969\pm 0.005$) are consistent within their statistical uncertainties. A correct evidence is only obtained in importance sampling when the inferred posterior $q(\theta|x)$ covers the entire posterior $p(\theta|x)$~\cite{Dax:2022pxd}, so this is another indication that FMPE indeed induces mass-covering posteriors.

\begin{figure}
  \centering
  \begin{minipage}{\textwidth}
    \raisebox{0.\height}{\includegraphics[width=0.6\textwidth]{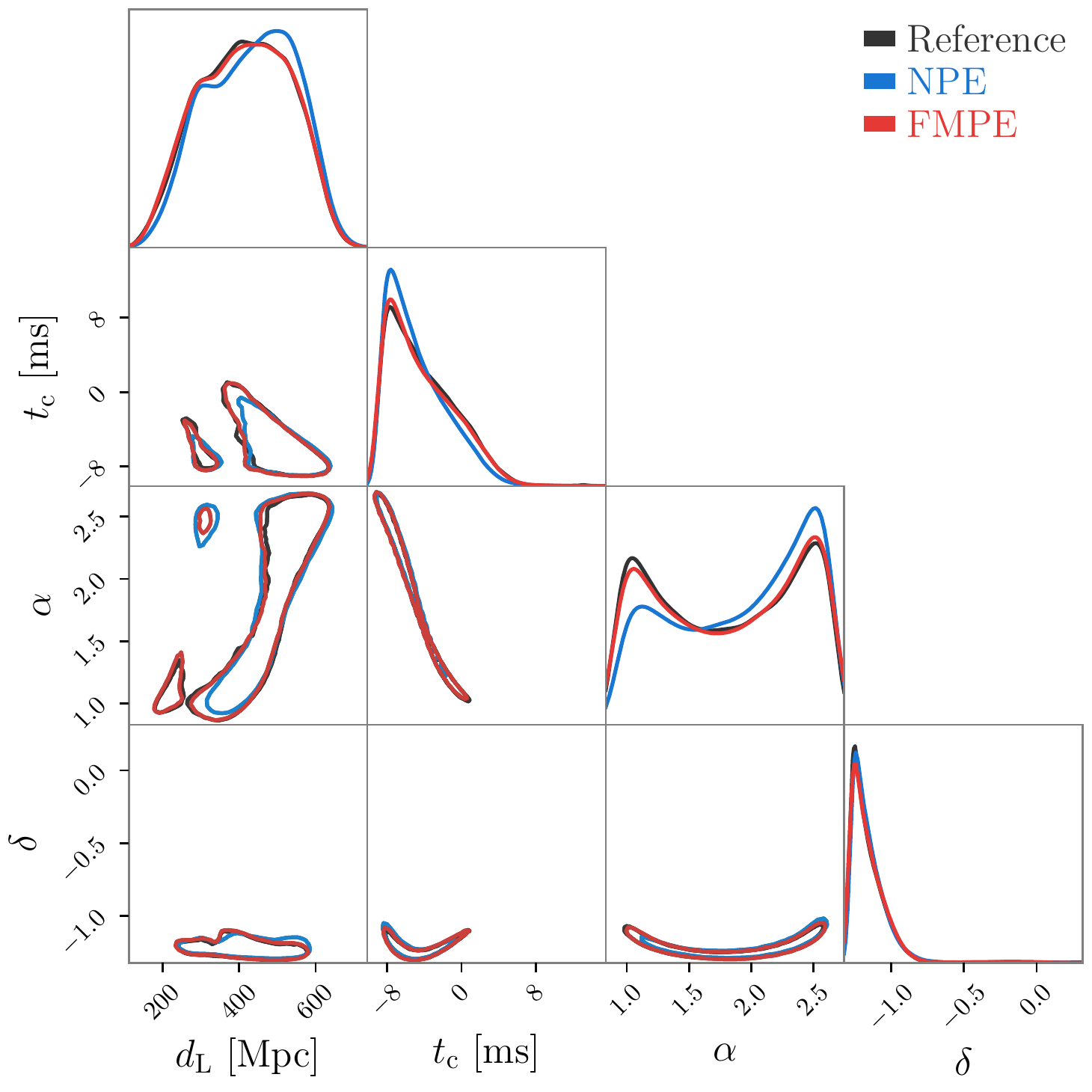}}
    \hfill
    \raisebox{0.06\height}{\includegraphics[width=0.3\textwidth]{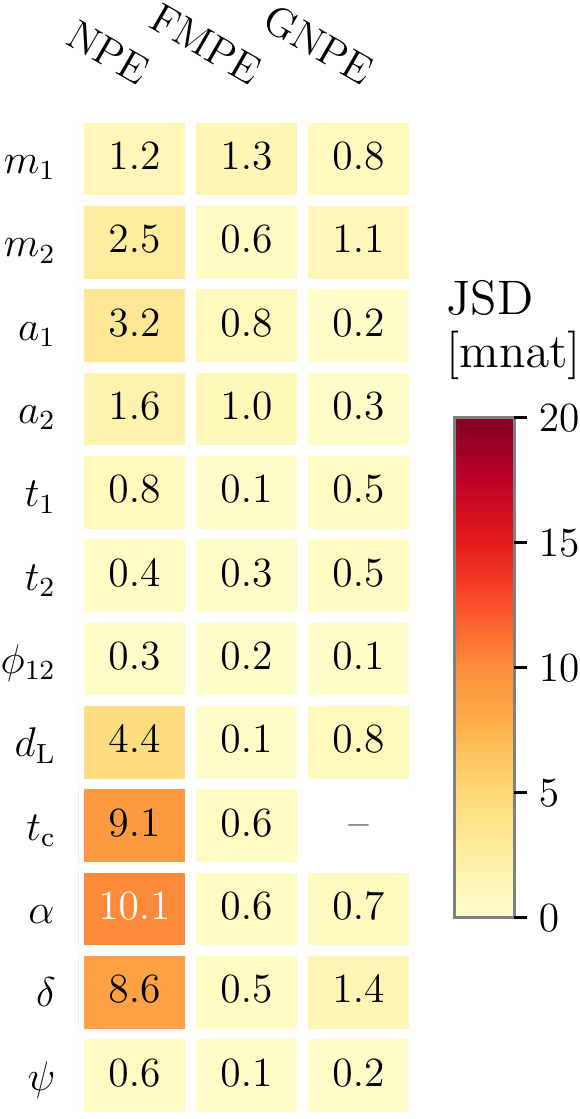}}  \end{minipage}
  \caption{\label{fig:GW150914}
  Results for GW150914~\cite{Abbott:2016blz}. Left: Corner plot showing 1D marginals on the diagonal and 2D 50\% credible regions. We display four GW parameters (distance $d_\text{L}$, time of arrival $t_\text{c}$, and sky coordinates $\alpha,\delta$); these represent the least accurate NPE parameters. Right: Deviation between inferred posteriors and the reference, quantified by the Jensen-Shannon divergence (JSD). The FMPE posterior matches the reference more accurately than NPE, and performs similarly to symmetry-enhanced GNPE. (We do not display GNPE results on the left due to different data conditioning settings in available networks.)
  }
\end{figure}

\subsection{Discussion}

Our results for GW150914 show that FMPE substantially outperforms NPE on this challenging problem. We believe that this is related to the network structure as follows. The NPE network allocates roughly two thirds of its parameters to the discrete normalizing flow and only one third to the embedding network (i.e., the feature extractor for $x$). Since FMPE parameterizes just a vector field (rather than a collection of splines in the normalizing flow) it can devote its network capacity to the interpretation of the high-dimensional $x \in \mathbb{R}^{15744}$. Hence, it scales better to larger networks and achieves higher accuracy. Remarkably, the performance iscomparable to GNPE, which involves a much simpler learning task with likelihood symmetries integrated by construction. This enhanced performance, comes in part at the cost of increased inference times, typically requiring hundreds of network forward passes. See Appendix~\ref{sec:appendix-gw} for further details. 

In future work we plan to carry out a more complete analysis of GW inference using FMPE. Indeed, GW150914 is a loud event with good data quality, where NPE already performs quite well. DINGO with GNPE has been validated in a variety of settings~\cite{Dax:2021tsq,Dax:2021myb,Dax:2022pxd,Wildberger:2022agw} including events with a larger performance gap between NPE and GNPE~\cite{Dax:2021myb}. Since FMPE (like NPE) does not integrate physical symmetries, it would likely need further enhancements to fully compete with GNPE. This may require a symmetry-aware architecture~\cite{cohen2016group}, or simply further scaling to larger networks. A straightforward application of the GNPE mechanism to FMPE---GFMPE---is also possible, but less practical due to the higher inference costs of FMPE.  
Nevertheless, our results demonstrate that FMPE is a promising direction for future research in this field. 

\section{Conclusions}
We introduced flow matching posterior estimation, a new simulation-based inference technique based on continuous normalizing flows. In contrast to existing neural posterior estimation methods, it does not rely on restricted density estimation architectures such as discrete normalizing flows, and instead parametrizes a distribution in terms of a conditional vector field. Besides enabling flexible path specifications, while maintaining direct access to the posterior density, we empirically found that regressing on a vector field rather than an entire distribution improves the scalability of FMPE compared to existing approaches. Indeed, fewer parameters are needed to learn this vector field, allowing for larger networks, ultimately enabling to solve more complex problems. Furthermore, our architecture for FMPE (a straightforward ResNet with GLU conditioning) facilitates parallelization and allows for cheap forward/backward passes.

We evaluated FMPE on a set of 10 benchmark tasks and found competitive or better performance compared to other simulation-based inference methods. On the challenging task of gravitational-wave inference, FMPE substantially outperformed comparable discrete flows, producing samples on par with a method that explicitly leverages symmetries to simplify training. Additionally, flow matching latent spaces are more naturally structured than those of discrete flows, particularly when using paths such as optimal transport. Looking forward, it would be interesting to exploit such structure in designing learning algorithms. This performance and flexibilty underscores the capability of continuous normalizing flows to efficiently solve inverse problems.

\section*{Acknowledgements}
We thank the \textsc{Dingo} team for helpful discussions and comments. We would like to particularly acknowledge the contributions of Alessandra Buonanno, Jonathan Gair,  Nihar Gupte and Michael Pürrer. This material is based upon work supported by
NSF's LIGO Laboratory which is a major facility fully funded by the
National Science Foundation. This research has made use of data or
software obtained from the Gravitational Wave Open Science Center
(gw-openscience.org), a service of LIGO Laboratory, the LIGO
Scientific Collaboration, the Virgo Collaboration, and KAGRA. LIGO
Laboratory and Advanced LIGO are funded by the United States National
Science Foundation (NSF) as well as the Science and Technology
Facilities Council (STFC) of the United Kingdom, the
Max-Planck-Society (MPS), and the State of Niedersachsen/Germany for
support of the construction of Advanced LIGO and construction and
operation of the GEO600 detector. Additional support for Advanced LIGO
was provided by the Australian Research Council. Virgo is funded,
through the European Gravitational Observatory (EGO), by the French
Centre National de Recherche Scientifique (CNRS), the Italian Istituto
Nazionale di Fisica Nucleare (INFN) and the Dutch Nikhef, with
contributions by institutions from Belgium, Germany, Greece, Hungary,
Ireland, Japan, Monaco, Poland, Portugal, Spain. The construction and
operation of KAGRA are funded by Ministry of Education, Culture,
Sports, Science and Technology (MEXT), and Japan Society for the
Promotion of Science (JSPS), National Research Foundation (NRF) and
Ministry of Science and ICT (MSIT) in Korea, Academia Sinica (AS) and
the Ministry of Science and Technology (MoST) in Taiwan. 
M.D. thanks the Hector Fellow Academy for support. J.H.M. and B.S. are members of the MLCoE, EXC number 2064/1 – Project number 390727645 and the Tübingen AI Center funded by the German Ministry for Science and Education (FKZ 01IS18039A).

\bibliographystyle{unsrturl}

\newpage
\appendix

\section{Gaussian flow}
\label{sec:appendix-gaussian-flow}
We here derive the form of a vector field $v_t(\theta)$ that restricts the resulting continuous flow to a one dimensional Gaussian with mean $\hat\mu$ variance $\hat\sigma^2$. With the optimal transport path $\mu_t(\theta) = t\theta_1$, $\sigma_t(\theta) = 1 - (1 - \sigma_\text{min}) t \equiv\sigma_t$ from~\cite{flow_matching}, the sample-conditional probability path~\eqref{eq:sample-conditional-path} reads
\begin{align}\label{eq:gauss-ot-ptsc}
    p_t(\theta|\theta_1) = \N[t\theta_1,\sigma_t^2](\theta).
\end{align}
We set our target distribution
\begin{align}\label{eq:gauss-ot-q1}
    q_1(\theta_1) = \N[\hat\mu,\hat\sigma^2](\theta_1).
\end{align}
To derive the marginal probability path and the marginal vector field we need two identities for the convolution $\ast$ of Gaussian densities. Recall that the convolution of two function is defined by $f\ast g(x)=\int f(x-y)g(y)\,\d y$. We define the function
\begin{align}\label{eq:def_g}
     g_{\mu,\sigma^2}(\theta) & = \theta\cdot\N\left[\mu,\sigma^2\right](\theta).
\end{align}
Then the following holds
\begin{align}\label{eq:conv1}
    \N[\mu_1,\sigma_1^2]\ast \N[\mu_2,\sigma_2^2]&=
\N[\mu_1+\mu_2,\sigma_1^2+\sigma_2^2]\\
\label{eq:conv2}
    g_{0,\sigma_1^2}\ast \N[\mu_2,\sigma_2^2]
    &=
    \frac{\sigma_1^2}{\sigma_1^2+\sigma_2^2}\left(g_{\mu_2,\sigma_1^2+\sigma_2^2}-\mu_2\N[\mu_2,\sigma_1^2+\sigma_2^2]\right)
\end{align}
\subsubsection*{Marginal probability paths}
Marginalizing over $\theta_1$ in~\eqref{eq:gauss-ot-ptsc} with~\eqref{eq:gauss-ot-q1}, we find
\begin{align}\label{eq:gauss-ot-pt}
    \begin{split}
        p_t(\theta) 
        & = \int p_t(\theta|\theta_1) q(\theta_1)\, d\theta_1\\
        & = \int \N\left[t\theta_1, \sigma_t^2\right](\theta)\,\N\left[\hat\mu,\hat\sigma^2\right](\theta_1) d\theta_1\\
        & = \int \N\left[0, \sigma_t^2\right](\theta - t\theta_1)\, \N(t\hat\mu,(t\hat\sigma)^2)(t\theta_1)\cdot t\, d\theta_1\\
        & = \int \N\left[0, \sigma_t^2\right](\theta - \theta^t_1)\, \N\left[t\hat\mu,(t\hat\sigma)^2\right](\theta^t_1)\, d\theta^t_1\\
        & = \N\left[t\hat\mu, \sigma_t^2 + (t\hat\sigma)^2\right](\theta)
    \end{split}
\end{align}
where we defined $\theta_1^t = t\theta_1$ and used \eqref{eq:conv1}. 

\subsubsection*{Marginal vector field}
We now calculate the marginalized vector field $u_t(\theta)$ based on equation~(8) in~\cite{flow_matching}. Using the sample-conditional vector field~\eqref{eq:sample-conditional-vector-field} and the distributions~\eqref{eq:gauss-ot-ptsc} and~\eqref{eq:gauss-ot-q1} we find 
\begin{align}
    \begin{split}
        u_t(\theta) 
        & = \int u_t(\theta|\theta_1)\frac{p_t(\theta|\theta_1)q(\theta_1)}{p_t(\theta)}\, d\theta_1\\
        & = \frac{1}{p_t(\theta)}\int 
        \frac{(\theta_1 - (1 - \sigma_\text{min})\theta)}{\sigma_t}\cdot
        \N\left[t\theta_1,\sigma_t^2\right](\theta)\cdot
        \N\left[\hat\mu,\hat\sigma^2\right](\theta_1)\,
        d\theta_1\\
        & =  \frac{1}{p_t(\theta)}\int 
        \frac{\left(\theta_1 - (1 - \sigma_\text{min})\theta\right)}{\sigma_t}\cdot
        \N\left[0,\sigma_t^2\right](\theta - t\theta_1)\cdot
        \N\left[t\hat\mu,(t\hat\sigma)^2\right](t\theta_1)\cdot t\,
        d\theta_1\\
        & = \frac{1}{p_t(\theta)}\int 
        \frac{\left(\theta_1' - (1 - \sigma_\text{min})t\cdot\theta\right)}
        {\sigma_t\cdot t}\cdot
        \N\left[0,\sigma_t^2\right](\theta - \theta_1')\cdot
        \N\left[t\hat\mu,(t\hat\sigma)^2\right](\theta_1')\cdot
        d\theta_1'\\
        & = \frac{1}{p_t(\theta)}\int 
        \frac{\left(-\theta_1'' + (1 - (1 - \sigma_\text{min})t)\cdot\theta\right)}
        {\sigma_t\cdot t}\cdot
        \N\left[0,\sigma_t^2\right](\theta_1'')\cdot
        \N\left[t\hat\mu,(t\hat\sigma)^2\right](\theta - \theta_1'')\cdot
        d\theta_1''\\
        & = \frac{1}{p_t(\theta)}\int 
        \frac{\left(-\theta_1'' + \sigma_t\cdot\theta\right)}
        {\sigma_t\cdot t}\cdot
        \N\left[0,\sigma_t^2\right](\theta_1'')\cdot
        \N\left[t\hat\mu,(t\hat\sigma)^2\right](\theta - \theta_1'')\cdot
        d\theta_1''
        \end{split}
        \end{align}
        where we used the change of variables 
        $ \theta_1'  = t\theta_1$
        and $
    \theta_1''  =   \theta-\theta_1'$.
        Now we evaluate this expression  using \eqref{eq:def_g}, then the identities \eqref{eq:conv1}
        and \eqref{eq:conv2} and the marginal probability \eqref{eq:gauss-ot-pt}
\begin{align}\label{eq:gaussian-vf-ot}
            \begin{split}
        u_t(\theta) & = 
        \frac{-1}{p_t(\theta)\cdot \sigma_t\cdot t} 
        \left(g_{0,\sigma_t^2}\ast
        \N\left[t\hat\mu,(t\hat\sigma)^2\right]\right)(\theta)
        + \frac{\theta}{p_t(\theta)\cdot t} 
        \left(\N\left[0,\sigma_t^2\right]\ast
        \N\left[t\hat\mu,(t\hat\sigma)^2\right]\right)(\theta)\\
        & = %
        \frac{-1}{p_t(\theta)\cdot \sigma_t\cdot t} 
        \frac{(\theta - t\hat\mu)\cdot\sigma_t^2}{\sigma_t^2 + (t\hat\sigma)^2}\cdot 
        \N\left[t\hat\mu, (\sigma_t^2 + (t\hat\sigma)^2)\right](\theta)+ \frac{\theta}{p_t(\theta)\cdot t} 
        \N\left[t\hat\mu, (\sigma_t^2 + (t\hat\sigma)^2)\right](\theta)\\
        & = \frac{(\sigma_t^2 + (t\hat\sigma)^2)\theta-(\theta - t\hat\mu)\cdot\sigma_t}
        {p_t(\theta)\cdot t\cdot(\sigma_t^2 + (t\hat\sigma)^2)}
        \cdot p_t(\theta)\\
        & = \frac{(\sigma_t^2 + (t\hat\sigma)^2 -\sigma_t)\theta + t\hat\mu\cdot\sigma_t}
        {t\cdot(\sigma_t^2 + (t\hat\sigma)^2)}.
    \end{split}
\end{align}
By choosing a vector field $v_t$ of the form~\eqref{eq:gaussian-vf-ot} with learnable parameters $\hat\mu,\hat\sigma^2$, we can thus define a continuous flow that is restricted to a one dimensional Gaussian.

\section{Mass covering properties of flows}
\label{sec:appendix-mass-covering-fm}
In this supplement, we investigate the mass covering properties of continuous normalizing flows trained using mean squared error and in particular prove Theorem~\ref{theorem:mass-coverage}. 
We first recall the notation from the main part.
We always assume that the data is distributed according to
$p_1(\theta)$.
In addition, there is a known and simple base distribution
$p_0$ and we assume that there is a vector field $u_t:[0,1]\times \R^d\to \R^d$ 
that connects $p_0$ and $p_1$ in the following sense.
We denote by
$\phi_t$ the flow generated by $u_t$, i.e., $\phi_t$
satisfies
\begin{align}
    \partial_t \phi_t(\theta)=u_t(\phi_t(\theta)).
\end{align}
Then we assume that 
$(\phi_1)_\ast p_0=p_1$ and we also define
the interpolations $p_t=(\phi_t)_\ast p_0$.

We do not have access to the ground truth distributions $p_t$ and the vector field $u_t$ but we try to learn a vector field $v_t$ approximating $u_t$.
We denote its flow by $\psi_t$ and we define
$q_t = (\psi_t)_\ast q_0$ and $q_0=p_0$.
We are interested in the mass covering properties 
of the learned approximation $q_1$ of $p_1$. In particular,
we want to relate the KL-divergence 
$\KL(p_1||q_1)$ to the mean squared error,
\begin{align}
    \MSE_p(u,v)=\int_0^1 \d t\, \int p_t(\d\theta)(u_t(\theta)-v_t(\theta))^2,
\end{align}
of the generating vector fields.
The first observation is that without any 
regularity assumptions on $v_t$ it is impossible to 
obtain any bound on the KL-divergence in terms of the mean squared error.

\begin{lemma}\label{le:holes}
    For every $\eps>0$ there are  vector field $u_t$
    and $v_t$ and a base distribution $p_0=q_0$ such that 
    \begin{align}
        \MSE_p(u,v)<\eps \text{  and   } 
        \KL(p_1||q_1)=\infty.
    \end{align}
    In addition we can construct $u_t$ and $v_t$ such that the support of $p_1$ is larger than the support of $q_1$.
\end{lemma}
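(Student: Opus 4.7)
The plan is to exhibit an explicit one-dimensional example in which the mean squared error weighted by $p_t$ is arbitrarily small while the flow $\psi_t$ of $v_t$ fails to cover the support of $p_1$, forcing $\KL(p_1 \Vert q_1) = +\infty$. The core intuition is that $\MSE_p(u,v)$ is insensitive to how $u_t$ and $v_t$ differ outside the support of $p_t$, and even inside that support a gentle contracting perturbation can strictly shrink the image of the base measure, while $\KL$ is instantly infinite as soon as $p_1$ has a mass hole relative to $q_1$.

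I will take $p_0 = q_0$ to be the uniform distribution on $[0,1]$ and choose $u_t \equiv 0$, so that $\phi_t$ is the identity, $p_t \equiv p_0$ for all $t$, and in particular $p_1$ is uniform on $[0,1]$. For a small parameter $\eta > 0$, to be fixed at the end, I set $v_t(\theta) = -\eta \theta$. This is a smooth, globally Lipschitz vector field on $\R$ generating the explicit linear flow $\psi_t(\theta) = e^{-\eta t}\theta$, which is a diffeomorphism of $\R$.

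The verification proceeds in three short steps. First, a direct computation gives
\begin{equation*}
\MSE_p(u,v) = \int_0^1 \int_0^1 \eta^2 \theta^2 \, \d\theta\, \d t = \eta^2/3,
\end{equation*}
which can be made less than $\eps$ by choosing $\eta$ sufficiently small. Second, $q_1 = (\psi_1)_\ast q_0$ is uniform on $[0, e^{-\eta}]$ with density $e^{\eta}$, so $\mathrm{supp}(q_1) = [0, e^{-\eta}]$ is a strict subset of $\mathrm{supp}(p_1) = [0,1]$. Third, because $p_1$ has positive Lebesgue density on the interval $(e^{-\eta}, 1]$ where $q_1$ vanishes identically, $p_1$ is not absolutely continuous with respect to $q_1$, which forces $\KL(p_1 \Vert q_1) = +\infty$. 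This single construction therefore establishes both claims of the lemma simultaneously.

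There is no serious technical obstacle; the construction is entirely explicit and uses only a smooth, globally Lipschitz vector field with an explicit linear flow, so all push-forwards and integrals can be written down in closed form. The conceptual content is that $\MSE_p$ is an $L^2$ functional weighted by the source density, whereas $\KL$ is an $f$-divergence that diverges as soon as the approximation has a mass hole in the support of the target. A purely contractive $v_t$ decouples these two properties in the cleanest possible way, and the example explains why the additional regularity hypotheses in Theorem~\ref{theorem:mass-coverage} are genuinely needed to turn MSE control into a KL bound.
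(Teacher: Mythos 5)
Your construction is correct: with $p_0=q_0=\mc{U}([0,1])$, $u_t\equiv 0$ and $v_t(\theta)=-\eta\theta$, one indeed gets $p_t\equiv p_0$, $\MSE_p(u,v)=\eta^2/3<\eps$ for small $\eta$, $q_1=\mc{U}([0,e^{-\eta}])$, hence $\mathrm{supp}(q_1)\subsetneq\mathrm{supp}(p_1)$ and $p_1\not\ll q_1$, so $\KL(p_1||q_1)=\infty$; this establishes both claims of the lemma. However, your route is genuinely different from the paper's. The paper also takes $u_t\equiv 0$ with a uniform base, but chooses a \emph{discontinuous} (non-Lipschitz) $v_t$ that is a positive constant only on a small interval, so the flow pushes the mass in $[0,\eps)$ onto the point $\eps$ and punches a hole in the \emph{interior} of the support; the blame for $\KL=\infty$ is thereby placed entirely on the irregularity of the vector field, and (as the paper remarks) that construction survives replacing the uniform base by any smooth base distribution. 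Your example instead uses a smooth, globally Lipschitz contraction and obtains the support mismatch by shrinking the support at its boundary, which only works because the uniform base has a hard edge: its score blows up at the boundary, violating Assumption~\ref{as:2} of Theorem~\ref{theorem:mass-coverage}, and with a smooth full-support base (e.g.\ Gaussian) your $\psi_1$ would be a diffeomorphism of $\R$ and the KL would be finite. So both proofs are valid for the lemma as stated, but they illustrate complementary failure modes: the paper's shows that irregular vector fields alone can destroy mass coverage even for nice $p_0$ (which is the point the lemma serves in motivating the regularity assumptions on $u_t,v_t$), while yours shows that regularity of the vector fields is not enough without the smoothness assumption on $p_0$ -- a useful observation, but one that motivates a different hypothesis of the theorem than the one the paper's subsequent Lemma~\ref{le:lipschitz} is designed to probe.
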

\begin{proof}
We consider the uniform distribution 
$p_0=q_0\sim\mc{U}([-1,1])$ and the vector fields 
\begin{align}
    u_t(\theta) = 0
\end{align}
and
\begin{align}
    v_t(\theta) = 
    \begin{cases}
        \eps \quad \text{for $0\leq \theta< \eps$,}
        \\
        0 \quad \text{otherwise}.
    \end{cases}
\end{align} 
As before, let $\phi_t$ denote the flow of the vector field $u_t$ and similarly $\psi_t$ denote the flow of $v_t$.
Clearly $\phi_t (\theta)=\theta$.
On the other hand
\begin{align}
    \psi_t(\theta) 
    =
    \begin{cases}
        \min(\theta + \eps t , \eps)
        \quad \text{if $0\leq \theta<\eps$,}
        \\
        \theta \quad \text{otherwise}.
    \end{cases}
\end{align}
In particular
\begin{align}
    \psi_1(\theta) 
    =
    \begin{cases}
        \eps
        \quad \text{if $0\leq \theta<\eps$,}
        \\
        \theta \quad \text{otherwise}.
    \end{cases}
\end{align}
This implies that 
$p_1=(\phi_1)_\ast p_0 \sim \mc{U}([-1,1])$.
On the other hand 
$q_1 =(\psi_1)_\ast q_0$ has support
in $[-1,0]\cup [\eps, 1]$. In particular, the distribution of $q_1$ is not mass covering  with respect to $p_1$
and $\KL (p_1||q_1)=\infty$.
Finally, we observe that the MSE can be arbitrarily small
\begin{align}
 \MSE_p(u,v)=    \int_0^1 \d t \int p_t(\d \theta)
    |u_t(\theta)-v_t(\theta)|^2
    = 
    \int_0^1 \int_0^\eps \frac12
    \eps^2
    =\frac{\eps^3}{2}.
\end{align}
Here we used that the density of $p_t(\d \theta)$ is $1/2$ for $-1\leq \theta\leq 1$.
\end{proof}
We see that an arbitrary small MSE-loss cannot ensure that the probability distribution is mass covering and the KL-divergence is finite.
On a high level this can be explained by the fact that for 
vector fields $v_t$ that are not Lipschitz continuous the
flow is not necessarily continuous, and we can generate holes in the distribution. Note that we chose $p_0$ to be a  uniform distribution for simplicity, but the result extends
to any smooth distribution, in particular the result does not rely on the discontinuity of $p_0$.

Next,  we investigate the mass covering property for Lipschitz continuous flows. When the flows $u_t$
and $v_t$ are Lipschitz continuous (in $\theta$) this ensures
that the flows $\psi_1$ and $\phi_1$ are continuous in $x$ and it is not possible to create holes in the distribution as shown above
for non-continuous vector fields.
We show a weaker bound in this setting.
\begin{lemma}\label{le:lipschitz}
      For every $0\leq  \delta \leq 1$ there is a base distribution  $p_0=q_0$ and the are  Lipschitz-continuous vector fields $u_t$
    and $v_t$ such that 
    $\MSE_p(u,v)=\delta$ and
    \begin{align}
        \KL(p_1||q_1)\geq  \frac12 \MSE_p(u,v)^{1/3}.
    \end{align}
\end{lemma}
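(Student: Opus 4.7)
The plan is to smooth the discontinuous construction of Lemma~\ref{le:holes} into a Lipschitz one and to track how the resulting rarefied region in $q_1$ scales with the smoothing parameter. I would take $p_0=q_0$ uniform on $[-1,1]$ and $u_t\equiv 0$, so that $p_t=p_0$ for all $t\in[0,1]$ and $\phi_1=\mathrm{id}$. For $v_t$ I would use the time-independent shear
\begin{equation*}
v_t(\theta)=V\cdot\max\!\bigl(0,\,\min(\theta/\eta,1)\bigr),
\end{equation*}
extended by a bounded Lipschitz tail outside $[-1,1]$, with $V,\eta>0$ to be tuned in terms of $\delta$ at the end; this $v_t$ is Lipschitz with constant $L=V/\eta$.

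First I would integrate the ODE $\dot\theta=v(\theta)$ region by region. Points with $\theta_0\le 0$ are fixed; on $(0,\eta e^{-L}]$ the flow is purely exponential, $\theta(t)=\theta_0 e^{Lt}$, landing in $(0,\eta]$ at $t=1$ with constant Jacobian $e^L$; points in $(\eta e^{-L},\eta]$ cross the threshold during $[0,1]$ and map into $(\eta,\eta+V]$ by an explicit logarithmic formula; points in $[\eta,1]$ translate rigidly by $V$. Reading off the pushforward density then gives $q_1(\theta)=\tfrac12 e^{-L}$ on the ``rarefied plateau'' $(0,\eta]$, $q_1(\theta)=\tfrac12 e^{(\theta-\eta-V)/\eta}$ on the crossing region $(\eta,\eta+V]$, and $q_1=\tfrac12$ elsewhere on the image.

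Next I would compute both sides of the inequality. The MSE is an elementary integral,
\begin{equation*}
\MSE_p(u,v)=\tfrac12\int_{-1}^{1}v(\theta)^2\,\d\theta=\frac{V^2(3-2\eta)}{6},
\end{equation*}
and the forward KL divergence admits a clean lower bound by summing the plateau and crossing-region contributions,
\begin{equation*}
\KL(p_1\|q_1)\;\ge\;\frac{\eta L}{2}+\frac{V^2}{4\eta}\;=\;\frac{V}{2}+\frac{V^2}{4\eta}.
\end{equation*}
The second term is the essential one: it is driven by the Lipschitz constant $L$ and is what ultimately produces the $\delta^{1/3}$ scaling, whereas the plateau contribution $V/2$ only gives $\sqrt\delta$.

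Finally I would pick the parameters. Setting $\eta=\delta^{2/3}$ and then solving $V^2(3-2\eta)/6=\delta$ for $V$ yields $V=\sqrt{6\delta/(3-2\delta^{2/3})}$, which makes the MSE equal to $\delta$ exactly. Substituting into the lower bound, the second term alone evaluates to
\begin{equation*}
\frac{V^2}{4\eta}=\frac{3\delta^{1/3}}{2\,(3-2\delta^{2/3})}\;\ge\;\tfrac12\delta^{1/3}
\end{equation*}
for every $\delta\in(0,1]$, which is the required inequality; the case $\delta=0$ is trivial ($v_t=u_t\equiv 0$). The main obstacle I expect is purely book-keeping: the rarefied-plateau calculation relies on carefully tracking the exponential solution of the ODE across the regime change at $\theta=\eta$, and one must confirm the Jacobian $\eta/\theta_0$ on the crossing region to justify the clean $V^2/(4\eta)$ contribution. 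Once those densities are in hand, the scaling $\eta\sim\delta^{2/3}$, $V\sim\sqrt\delta$, $L\sim\delta^{-1/6}$ makes all the powers line up.
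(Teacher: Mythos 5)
Your construction and the explicit computations behind it check out: with $u\equiv 0$ one indeed has $p_t=p_0$ uniform, the plateau density of $q_1$ on $(0,\eta]$ is $\tfrac12 e^{-L}$, the crossing-region density on $(\eta,\eta+V]$ is $\tfrac12 e^{(\theta-\eta-V)/\eta}$ with Jacobian $\eta/\theta_0$, and $\MSE_p(u,v)=V^2(3-2\eta)/6$. The gap is in the concluding step: the bound $\KL(p_1||q_1)\geq \tfrac{V}{2}+\tfrac{V^2}{4\eta}$ holds only if the crossing region $(\eta,\eta+V]$ lies inside the support $[-1,1]$ of $p_1$, i.e.\ if $\eta+V\leq 1$. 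With your choice $\eta=\delta^{2/3}$, $V=\sqrt{6\delta/(3-2\delta^{2/3})}$ this fails once $\delta$ exceeds roughly $0.19$; at $\delta=1$ you have $\eta=1$, $V=\sqrt6$, the crossing region sits entirely outside $[-1,1]$ and contributes nothing, so $\KL(p_1||q_1)=V/2\approx 1.22$ while the term you invoke, $V^2/(4\eta)=3/2$, is not a lower bound there. The lemma itself survives, but you must add a case split: for $\delta\geq 1/8$ the plateau term alone gives $\KL(p_1||q_1)\geq V/2\geq\tfrac12\sqrt{2\delta}\geq\tfrac12\delta^{1/3}$ (using $V\geq\sqrt{2\delta}$ and $\sqrt{2\delta}\geq\delta^{1/3}$ iff $\delta\geq 1/8$), and for $\delta<1/8$ one checks $\eta+V\leq\tfrac14+\sqrt{0.3}<1$, so your original argument applies. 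With that patch the proof is complete.

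Beyond this, your route is a genuine variant of the paper's proof rather than a reproduction. The paper also takes the uniform base distribution, $u\equiv 0$, and a piecewise-linear $v$, but uses a tent-shaped profile with slope fixed at $2$ and a single small parameter $\eps$, yielding $\KL(p_1||q_1)\geq\eps(1-e^{-2})$ against $\MSE_p(u,v)\sim\eps^3$. Your two-parameter ramp buys an exact hit of $\MSE_p(u,v)=\delta$ and a clean closed form for the KL, but at the price of a Lipschitz constant $L=V/\eta\sim\delta^{-1/6}$ that diverges as $\delta\to 0$, whereas the paper's construction keeps $L$ uniformly bounded. Since the point of the lemma (see the remark after Theorem~\ref{theorem:mass-coverage}) is that Lipschitz control alone---without second-derivative bounds---cannot yield an $\MSE^{1/2}$ bound, and the theorem's constant depends on $L$, the fixed-Lipschitz family makes that point more forcefully; a family with $L\to\infty$ satisfies the literal statement of the lemma but does not by itself rule out an $\MSE^{1/2}$ bound with $L$-dependent constants.
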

\begin{proof}
We consider $p_0$, $q_0$ and $u_t$ as in Lemma~\ref{le:holes}, and we define
\begin{align}
    v_t(\theta) =
    \begin{cases}
        2\theta \quad \text{for $0\leq \theta< \eps$,}
        \\
        2\eps - \theta \quad \text{for $\eps\leq \theta< 2\eps$,}
        \\
        0 \quad \text{otherwise}.
    \end{cases}
\end{align}
Then we can calculate for 
$0\leq \theta \leq e^{-2}\eps$
that 
\begin{align}
    \psi_t(\theta)
    = \theta e^{2t}.
\end{align}
Similarly we obtain
for $\eps\leq \theta\leq 2\eps$
(solving the ODE $f'=2f$)
\begin{align}
    \psi_t(\theta)
    = 2\eps - (2\eps - \theta) e^{-2t}.
\end{align}
We find
\begin{align}\label{eq:Psi_evals}
   \psi_1(0)=0,\; \psi_1(e^{-2}\eps)=\eps, 
   \;
   \psi_1(\eps)=2-\eps e^{-2};
   \;
   \psi_2(2\eps)=2\eps.
\end{align}
Next we find for the densities of $q_1$ that 
\begin{align}
    q_1(\psi_1(\theta))
    =
    q_0(\theta) |\psi_1'(\theta)|^{-1}
    =\frac12
    \begin{cases}
    e^{-2} \quad \text{for $0\leq \theta \leq e^{-2}\eps$,}
    \\
    e^{2} \quad \text{for $\eps\leq \theta \leq 2\eps$.}
    \end{cases}
\end{align}
Together with \eqref{eq:Psi_evals} this implies that the density of $q_1$ is given by
\begin{align}
    q_1(\theta)
    =\frac12
    \begin{cases}
    e^{-2} \quad \text{for $0\leq \theta \leq \eps$,}
    \\
    e^{2} \quad \text{for $2\eps-\eps e^{-2}\leq \theta \leq 2\eps$.}
    \end{cases}
\end{align}
Note that $p_1(\theta)=1/2$ for $-1\leq \theta\leq 1$ and therefore
\begin{align}
    \int_0^\eps \ln \frac{p_1(\theta)}{q_1(\theta)} p_1(\d \theta)
    = \int_0^\eps \ln(e^2)\frac12 \d \theta
    = \eps,
\end{align}
and
\begin{align}
    \int_{2\eps-\eps e^{-2}}^{2\eps} \ln \frac{p_1(\theta)}{q_1(\theta)} p_1(\d \theta)
    = \int_{2\eps-\eps e^{-2}}^{2\eps} \ln(e^{-2})\frac12 \d \theta
    =- \eps e^{-2} .
\end{align}
Moreover we note 
\begin{align}
    \int_\eps^{2\eps-\eps e^{-2}}
    q_1(\d \eps)
    =
    \int_{e^{-2}\eps}^{\eps}
    q_0(\d\eps)=\frac12 \eps(1-e^{-2})
    =\int_\eps^{2\eps-\eps e^{-2}}
    p_1(\d \eps),
\end{align}
which implies (by positivity of the KL-divergence) that 
\begin{align}
    \int_\eps^{2\eps-\eps e^{-2}} 
    \ln\left(\frac{p_1(\theta)}{q_1(\theta)}\right)
    p_1(\d \theta)\geq 0.
\end{align}
We infer using also 
$p_1(\theta)=q_1(\theta)=1/2$ for $\theta\in [-1, 0]\cap[2\eps,1]$ that
\begin{align}
    \KL (p_1||q_1)
    =\int \ln\left(\frac{p_1(\theta)}{q_1(\theta)}\right)
    p_1(\d \theta)\geq
    \eps (1-e^{-2}).
\end{align}
On the other hand we can bound
\begin{align}
  \int_0^1 \d t  \int p_t(\d\theta) |v_t(\theta)-u_t(\theta)|^2
    = 
 \frac12 \int_0^1 \d t  \int_{0}^{2\eps} |u_t(\theta)|^2
 =\int_0^{\eps} s^2\, \d s =\frac{\eps^3}{3}.
\end{align}
We conclude that 
\begin{align}
    \KL (p_1||q_1)\geq \frac12 \left(\mathrm{MSE}_p(u, v)\right)^{1/3}.
\end{align}
In particular, it is not possible to 
bound the KL-divergence by the MSE even when the vector fields are Lipschitz continuous.
\end{proof}
Let us put this into context. It was already shown in \cite{albergo2023stochastic} that
we can, in general, not bound the forward KL-divergence by the mean squared error and our Lemmas~\ref{le:holes} and \ref{le:lipschitz} are concrete examples. On the other hand,
when considering SDEs the KL-divergence
can be bounded by the mean squared error of the drift terms as shown in \cite{song2021maximum}. Indeed, in \cite{albergo2023stochastic} the favorable smoothing effect was carefully investigated.

Here we show that we can alternatively obtain an upper bound on the KL-divergence
when assuming that $u_t$, $v_t$,
and $p_0$ satisfy additional regularity assumptions.
This allows us to recover the mass covering property from bounds on the means squared error for sufficiently smooth vector fields. The scaling is nevertheless still weaker than for SDEs.

 We now state our assumptions.
 We denote the gradient with respect to $\theta$  by $\nabla = \nabla_\mu$ and second derivatives by $\nabla^2=\nabla^2_{\mu\nu}$. When applying the chain rule, we leave the indices implicit.
 We denote by $|\cdot|$ the Frobenius norm $|A|=\left(\sum_{ij} A_{ij}^2\right)^{1/2}$ of a matrix. The Frobenius norm is submultiplicative, i.e., $|AB|\leq |A|\cdot |B|$
 and directly generalizes to higher order tensors.
\begin{assumption}\label{as:1}
We assume that
\begin{align}
    |\nabla u_t |\leq L, \; |\nabla  v_t|\leq L, \;
    |\nabla ^2 u_t|\leq L',\;
    |\nabla ^2 v_t|\leq L'.
\end{align}
\end{assumption}
We require one further assumption on $p_0$.
\begin{assumption}\label{as:2}
There is a constant $C_1$ such that
\begin{align}\label{eq:cond_bound_score}
   | \nabla \ln p_0(\theta)|\leq  C_1(1 + |\theta|).
\end{align}
We also assume that 
\begin{align}\label{eq:cond_square_integrable}
    \Ex_{p_0} |\theta|^2 <C_2< \infty.
\end{align}
\end{assumption}
Note that \eqref{eq:cond_bound_score} holds, e.g., if $p_0$ follows a Gaussian distribution but also for smooth distribution with slower decay at $\infty$.
If we assume that $| \nabla \ln p_0(\theta)|$ is bounded the 
proof below simplifies slightly. This is, e.g., the
case if $p_0(\theta)\sim e^{-|\theta|}$ as $|\theta|\to \infty$.

We need some additional notation.
It is convenient to 
 introduce  $\phi^s_t = \phi_t \circ (\phi_s)^{-1}$, i.e., the flow from time $s$ to $t$ (in particular $\phi^0_t=\phi_t$) and similarly for $\psi$. 
We can now restate and prove Theorem~\ref{theorem:mass-coverage}.
\begin{theorem}
Let $p_0=q_0$ and assume $u_t$ and $v_t$ are two vector fields whose flows satisfy $p_1=(\phi_1)_\ast p_0$
and $q_1=(\psi_1)_\ast q_0$. Assume that $p_0$ satisfies
Assumption~\ref{as:2} and $u_t$ and $v_t$ satisfy
Assumption~\ref{as:1}. Then there is a constant $C>0$ depending on $L$, $L'$, $C_1$, $C_2$, and $d$ 
such that (for $\MSE_p(u,v)<1)$)
\begin{align}
    \KL(p_1||q_1)\leq C \MSE_p(u,v)^{\frac12}.
\end{align}

\end{theorem}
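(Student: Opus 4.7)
The plan is to express $\KL(p_1||q_1)$ as a time integral along the flow and then isolate the $\MSE$ factor via Cauchy-Schwarz. Since $p_0 = q_0$, we have $\KL(p_0||q_0) = 0$, so $\KL(p_1||q_1) = \int_0^1 \tfrac{d}{dt}\KL(p_t||q_t)\,dt$. Using the continuity equations $\partial_t p_t = -\Div(p_t u_t)$ and $\partial_t q_t = -\Div(q_t v_t)$, differentiating $\int p_t\ln(p_t/q_t)\,d\theta$ and performing one integration by parts (together with the identity $\partial_t\ln q_t = -v_t\cdot\nabla\ln q_t - \Div v_t$) yields the clean identity
\begin{equation*}
\KL(p_1||q_1) \;=\; \int_0^1\!\!\int p_t(\theta)\,(v_t-u_t)(\theta)\cdot\bigl(\nabla\ln q_t-\nabla\ln p_t\bigr)(\theta)\,d\theta\,dt.
\end{equation*}

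Applying Cauchy--Schwarz in $L^2(dt\,p_t\,d\theta)$ immediately gives
\begin{equation*}
\KL(p_1||q_1) \;\le\; \MSE_p(u,v)^{1/2}\cdot \Bigl(\int_0^1\!\!\int p_t\,|\nabla\ln q_t-\nabla\ln p_t|^2\,d\theta\,dt\Bigr)^{1/2}.
\end{equation*}
The first factor is already what the theorem asks for, so the remaining task is to bound the relative Fisher-type term on the right by a constant $C^2$ depending only on $L,L',C_1,C_2,d$. By the triangle inequality it suffices to bound $\int_0^1\int p_t|\nabla\ln p_t|^2$ and $\int_0^1\int p_t|\nabla\ln q_t|^2$ separately.

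For the first quantity I would differentiate the change-of-variables identity $p_t(\phi_t(\theta_0))\det\nabla\phi_t(\theta_0)=p_0(\theta_0)$ to obtain the explicit formula $\nabla\ln p_t(\phi_t(\theta_0)) = (\nabla\phi_t(\theta_0))^{-\top}\bigl[\nabla\ln p_0(\theta_0)-\nabla_{\theta_0}\ln\det\nabla\phi_t(\theta_0)\bigr]$. Gr\"onwall applied to $\partial_t\nabla\phi_t=(\nabla u_t)(\phi_t)\,\nabla\phi_t$ bounds $|\nabla\phi_t|$ and $|(\nabla\phi_t)^{-1}|$ by $e^L$, and Jacobi's formula combined with $|\nabla^2 u_t|\le L'$ controls $|\nabla_{\theta_0}\ln\det\nabla\phi_t|$ by a constant; together with the score-growth bound $|\nabla\ln p_0(\theta_0)|\le C_1(1+|\theta_0|)$ and $\Ex_{p_0}|\theta_0|^2\le C_2$, this yields the desired estimate after pushing forward via $\phi_t$. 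The analogous bound for $\int p_t|\nabla\ln q_t|^2$ is the main technical obstacle, since $\nabla\ln q_t$ must be evaluated along the foreign flow $\phi_t$ rather than its own flow $\psi_t$. I would handle this by writing $\phi_t(\theta_0)=\psi_t(\tilde\theta_0)$ with $\tilde\theta_0=\psi_t^{-1}(\phi_t(\theta_0))$ and using the Lipschitz bounds on $\phi_t$ and $\psi_t^{-1}$ to produce a linear estimate $|\tilde\theta_0|\le a+b|\theta_0|$, after which the same argument as for $p_t$ applies. Combining the two bounds completes the proof; the restriction $\MSE_p(u,v)<1$ serves only to absorb any subleading contributions into the clean $\MSE^{1/2}$ rate.
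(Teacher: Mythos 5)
Your overall skeleton matches the paper's proof: the identity for $\partial_t \KL(p_t||q_t)$ (the sign differs from the paper's statement, but this is immaterial after Cauchy--Schwarz), the Cauchy--Schwarz step isolating $\MSE_p(u,v)^{1/2}$, and the reduction to bounding the two score integrals $\int p_t|\nabla\ln p_t|^2$ and $\int p_t|\nabla\ln q_t|^2$. Your treatment of the first integral (differentiating the change-of-variables identity, Gr\"onwall for $|\nabla\phi_t|\leq e^L$, Jacobi's formula with $|\nabla^2 u_t|\leq L'$) is correct and equivalent to the paper's formula $\nabla\ln p_t(\theta)=\nabla\ln p_0(\phi_0^t\theta)\cdot\nabla\phi_0^t(\theta)-\int_0^t(\nabla\Div u_s)(\phi_s^t\theta)\cdot\nabla\phi_s^t(\theta)\,\d s$.

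The gap is exactly at the step you yourself flag as the main obstacle. You claim that Lipschitz bounds on $\phi_t$ and $\psi_t^{-1}$ yield $|\tilde\theta_0|=|\psi_t^{-1}\phi_t(\theta_0)|\leq a+b|\theta_0|$ with $a,b$ depending only on $L,L',C_1,C_2,d$. They do not: Assumption~\ref{as:1} bounds only \emph{derivatives} of $u_t,v_t$, so the Lipschitz constants of the flows control increments but say nothing about where any individual point is sent. For instance, with constant fields $u_t\equiv M e_1$ and $v_t\equiv 0$ both flows are isometries, yet $\psi_t^{-1}\phi_t(\theta_0)=\theta_0+tMe_1$ with $M$ arbitrary---the displacement of the forward--backward composition is controlled only through $|u-v|$, i.e.\ through the MSE, and only in an $L^2(p_0)$-averaged sense. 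This is the crux of the paper's argument: one shows $\partial_t(\psi_0^t\phi_t)(\theta_0)=(\nabla\psi_0^t)(\phi_t(\theta_0))\,(u_t-v_t)(\phi_t(\theta_0))$, hence $|\psi_0^t\phi_t(\theta_0)-\theta_0|\leq e^L\int_0^t|u_s-v_s|(\phi_s(\theta_0))\,\d s$, and then Cauchy--Schwarz in $s$ together with the pushforward $p_s=(\phi_s)_\ast p_0$ gives $\int p_0(\d\theta_0)\,|\psi_0^t\phi_t(\theta_0)-\theta_0|^2\leq e^{2L}\MSE_p(u,v)$, which is what replaces your constant $a$. Without this mechanism the bound on $\int p_t|\nabla\ln q_t|^2$---and hence the theorem---does not follow from your sketch. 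Relatedly, the restriction $\MSE_p(u,v)<1$ is not merely cosmetic: it is used precisely to absorb the resulting $\bigl(1+\MSE_p(u,v)\bigr)^{1/2}$ factor from this step into the constant $C$.
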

\begin{remark}
    We do not claim that our
    results are optimal, it might be possible to find similar bounds for the forward KL-divergence with weaker assumptions. However, we emphasize that Lemma~\ref{le:lipschitz} shows that the result of the theorem is not true without the assumption on the second derivative of $v_t$ and $u_t$.  
\end{remark}
\begin{proof}

We want to control 
$\KL(p_1||q_1)$.
It can be shown that (see equation above (25) in \cite{song2021maximum}
or Lemma~2.19 in \cite{albergo2023stochastic}
)
\begin{align}
    \partial_t \KL(p_t||q_t)
    =
    -\int p_t(\d \theta)
    (u_t(\theta)-v_t(\theta))
  \cdot  (\nabla \ln p_t(\theta)-
    \nabla \ln q_t(\theta)).
\end{align}
Using Cauchy-Schwarz we can bound this by
\begin{align}\label{eq:partial_KL}
      \partial_t \KL(p_t||q_t)
      \leq
      \left(\int p_t(\d \theta)
    |u_t(\theta)-v_t(\theta)|^2\right)^{\frac12}
    \left(
\int p_t(\d \theta)
    |\nabla \ln p_t(\theta)-
    \nabla \ln q_t(\theta)|^2
    \right)^{\frac12}.
\end{align}
We use the relation (see \eqref{eq:cnf-density})
\begin{align}
    \ln (p_t(\phi_t(\theta_0))
    =
    \ln(p_0(\theta_0))  - \int_0^t (\Div u_s)(\phi_s(\theta_0)) \d s,
\end{align}
which can be equivalently rewritten (setting $\theta=\phi_t \theta_0$) as
\begin{align}
    \ln(p_t(\theta)) =
     \ln(p_0(\phi^t_0\theta))  - \int_0^t (\Div u_s)(\phi_s^t\theta) \d s.
\end{align}
We use the following relation for $\nabla \phi_s^t$
\begin{align}
    \nabla \phi_s^t(\theta) =
    \exp\left(\int_t^s\d \tau\, (\nabla  u_\tau)(\phi^t_\tau(\theta))\right).
\end{align}
This relation is standard and can be directly deduced from the following ODE for $\nabla  \phi_s^t$
\begin{align}
    \partial_s  \nabla \phi_s^t(\theta)
    =
    \nabla  \partial_s\phi_s^t(\theta)
    =
    \nabla  (u_s(\phi_s^t(\theta)))
    = \left((\nabla u_s)(\phi_s^t(\theta))\right) \cdot \nabla  \phi_s^t(\theta).
\end{align}
We can conclude that for $0\leq s,t\leq 1$ the bound
\begin{align}\label{eq:bound_D_flow}
    | \nabla \phi_s^t(\theta)|\leq e^L
\end{align}
holds.
We find 
\begin{align}
\begin{split}\label{eq:bound_score}
   | \nabla \ln(p_t(\theta))| &=
     \left| \nabla  \ln(p_0)(\phi^t_0\theta)\cdot \nabla  \phi^t_0(\theta) - \int_0^t (\nabla  \Div u_s)(\phi_s^t\theta)\cdot \nabla \phi^t_s (\theta)\d s\right|
     \\
     &\leq 
     |\nabla \ln(p_0)(\phi^t_0\theta) | e^L
     +L'e^L,
     \end{split}
\end{align}
and a similar bound holds for $q_t$.
In words, we have shown that the score of $p_t$ at $\theta$ can be bounded by the score of $p_0$ of theta transported along the vector field $u_t$ minus a correction which quantifies the change of score along the path.
We now bound using the definition $p_t=(\phi_t)_\ast p_0$
and the assumption \eqref{eq:cond_bound_score}
\begin{align}
\begin{split}\label{eq:bound_dq0_final}
    \int p_t(\d \theta) |\nabla\ln p_0(\phi^t_0(\theta))|^2
    &= 
    \int p_0(\d \theta_0) |\nabla\ln p_0(\phi^t_0\phi_t(\theta_0))|^2
    =
    \Ex_{p_0} |\nabla\ln p_0(\theta_0)|^2
    \\
    &\leq \Ex_{p_0} (C_1 (1 + |\theta_0|)^2)
    \leq 2C_1^2 (1 + \Ex_{p_0} |\theta_0|^2)\leq 2C_1^2(1+C_2^2).
    \end{split}
\end{align}
Similarly we obtain using $q_0=p_0$
\begin{align}\label{eq:bound_qt_score_p_t}
   \int p_t(\d \theta) |\nabla\ln q_0(\psi^t_0\theta)|^2
    = 
    \int p_0(\d \theta_0) |
\nabla\ln q_0(\psi^t_0\phi_t\theta_0)|^2 . 
\end{align}
In words, to control the score of $q$ integrated with respect to $p_t$ we need to control the distortion we obtain when moving forward with $u$ and backwards with $v$.
We investigate 
$\psi^t_0\phi_t(\theta_0)$. 
We now show
\begin{align}\label{eq:forward_backward}
    \partial_h \psi^{t+h}_t \phi_{t+h}^t(\theta)|_{h=0}
    = u_t(\theta) - v_t(\theta).
\end{align}
First, by definition of $\phi$, we find
\begin{align}\label{eq:forward}
     \partial_h  \phi_{t+h}^t(\theta)|_{h=0}
     =  \partial_h \phi_{t+h} \phi_t^{-1}(\theta)|_{h=0}
     = u_t(\phi_t \phi_t^{-1}(\theta))=u_t(\theta).
\end{align}
To evaluate the second contribution we observe
\begin{align}\label{eq:backward}
\begin{split}
   0&= \partial_h \theta|_{h=0}=\partial_h \psi^{t+h}_{t+h} (\theta)|_{h=0}
   =\partial_h \psi_{t+h} \psi_{t+h}^{-1} (\theta) |_{h=0}
   \\
   &=
   (\partial_h \psi_{t+h}) \psi_{t}^{-1} (\theta) |_{h=0}
   +  \psi_{t} (\partial_h \psi_{t+h}^{-1}) (\theta) |_{h=0}
   =v_t(\psi_t\psi_{t}^{-1} (\theta)) +   \partial_h \psi_{t} \psi_{t+h}^{-1} (\theta) |_{h=0}
   \\
   &=v_t(\theta) +   \partial_h \psi_{t}^{t+h}  (\theta) |_{h=0}
   \end{split}
\end{align}
Now \eqref{eq:forward_backward} follows
from \eqref{eq:forward} and \eqref{eq:backward}
together with $\phi_t^t=\psi_t^t=\mathrm{Id}$.
Using \eqref{eq:forward_backward} we find
\begin{align}
    \partial_t (\psi^t_0 \phi_t) (\theta_0)
    =
    \partial_h (\psi^t_0 \psi^{t+h}_t \phi_{t+h}^t \phi_t)(\theta_0)|_{h=0}
    =
    (\nabla  \psi^t_0)(\phi_t (\theta_0))\cdot
    \left((u_t-v_t)(\phi_t(\theta_0))\right).
\end{align}
Using \eqref{eq:bound_D_flow} we conclude that
\begin{align}
    \begin{split}
   | \psi^t_0 \phi_t (\theta_0)-\theta_0|
  &\leq 
  \left|
  \int_0^t \partial_s \psi^s_0 \phi_s (\theta_0)\, \d s
  \right|
  \leq 
  \int_0^t 
   |(\nabla  \psi^s_0)(\phi_s (\theta_0))|
  \cdot  |u_s-v_s|(\phi_s(\theta_0))\, \d s
  \\
  & \leq e^L \int_0^t 
   |u_s-v_s|(\phi_s(\theta_0))\, \d s.
   \end{split}
\end{align}
We use this and the assumption \eqref{eq:cond_bound_score} to  continue to estimate \eqref{eq:bound_qt_score_p_t} as follows
\begin{align}
    \begin{split}
     \int p_t(\d \theta) |\nabla \ln q_0(\psi^t_0\theta)|^2
        &= 
    \int p_0(\d \theta_0) |\nabla\ln q_0(\psi^t_0\phi_t(\theta_0))|^2
    \\
    &\leq C_1^2
    \int p_0(\d \theta_0) (1 +|\psi^t_0\phi_t(\theta_0)|)^2
    \\
    &\leq
    C_1^2
    \int p_0(\d \theta_0) (1 +|\psi^t_0\phi_t(\theta_0)-\theta_0| + |\theta_0|)^2
    \\
    &\leq
    3C_1^2 +3C_1^2
    \int p_0(\d \theta_0) \left(|\psi^t_0\phi_t(\theta_0)-\theta_0|^2 + |\theta_0|^2\right)
    \\
    &\leq 
    3C_1^2(1 + \Ex_{p_0}|\theta_0|^2)  +3C_1^2e^{2L}
    \int p_0(\d \theta_0) \left(\int_0^t \d s\, 
    |u_s-v_s|(\phi_s(\theta_0))\right)^2.
    \end{split}
\end{align}
Here we used $(a+b+c)^2\leq 3(a^2+b^2+c^2)$ in the second to last step.
We bound the remaining integral using Cauchy-Schwarz as follows
\begin{align}
\begin{split}
     \int p_0(\d \theta_0) \left(\int_0^t 
    |u_s-v_s|(\phi_s(\theta_0))\right)^2
    &\leq 
     \int p_0(\d \theta_0) \left(\int_0^t \d s\, 
    |u_s-v_s|^2(\phi_s(\theta_0))\right) \left(\int_0^t \d s\, 
   1^2 \right)
  \\
  & \leq 
   t \int_0^t \d s \int p_0(\d \theta_0) |u_s-v_s|^2(\phi_s(\theta_0))
   \\
   &=
   t \int_0^t \d s \int p_s(\d \theta_s) |u_s-v_s|^2(\theta_s)
   \\
   &\leq \int_0^1 \d s \int p_s(\d \theta_s) |u_s-v_s|^2(\theta_s)=\MSE_p(u,v) .
   \end{split}
\end{align}
The last displays together imply
\begin{align}\label{eq:bound_dp0_final}
     \int p_t(\d \theta) |\nabla\ln q_0(\psi^t_0\theta)|^2
     \leq  3C_1^2\left(1 + \Ex_{p_0}|\theta_0|^2+e^{2L}
     \MSE_p(u,v)\right).
\end{align}
Now we have all the necessary ingredients to bound 
the derivative of the KL-divergence.
We control the second integral in \eqref{eq:partial_KL}
using \eqref{eq:bound_score} (and again $(\sum_{i=1}^4 a_i)^2\leq 4\sum a_i^2$) as follows,
\begin{align}
\begin{split}
\int p_t(\d \theta)&
    |\nabla \ln p_t(\theta)-
    \nabla \ln q_t(\theta)|^2
   \\&\leq  
   2\cdot 2^2 \cdot L'^2 e^{2L}+ 
   4e^{2L}
   \int p_t(\d \theta) \left(|\nabla\ln q_0(\psi^t_0)\theta)|^2
   +  |\nabla\ln p_0(\phi^t_0)\theta)|^2\right).
   \end{split}
\end{align}
Using \eqref{eq:bound_dq0_final} and \eqref{eq:bound_dp0_final} we finally obtain
\begin{align}
\begin{split}
    \int p_t(\d \theta)
    |\nabla \ln p_t(\theta)-
    \nabla \ln q_t(\theta)|^2
   & \leq 
   8 \cdot L'^2 e^{2L}
   + 
   C_1^2 e^{2L}\left( 20 (1+C_2^2)
   +12 \MSE_p(u,v)\right)
   \\
   &\leq
   C(1+\MSE_p(u,v))
   \end{split}
\end{align}
for some constant $C>0$.
Finally, we obtain
\begin{align}
\begin{split}
    \KL(p_1||q_1) 
    &=\int_0^1 
    \d t\,
    \partial_t \KL(p_t||q_t)
   \\
    &\leq (C(1+\MSE_p(u,v)))^{\frac12}
    \int_0^1\d t\,  \left(\int p_t(\d \theta)
    |u_t(\theta)-v_t(\theta)|^2\right)^{\frac12} 
    \\
    &\leq 
    (C(1+\MSE_p(u,v)))^{\frac12}
   \left( \int_0^1\d t\,  \int p_t(\d \theta)
    |u_t(\theta)-v_t(\theta)|^2\right)^{\frac12} 
    \\
    &\leq 
    (C(1+\MSE_p(u,v)))^{\frac12} \MSE_p(u,v)^{\frac12}.
    \end{split}
\end{align}
\end{proof}
\section{SBI Benchmark}
\label{sec:appendix-sbibm}
In this section, we collect missing details and additional results for the analysis of the SBI benchmark in Section~\ref{sec:sbibm}.
\subsection{Network architecture and hyperparameters}
For each task and simulation budget in the benchmark, we perform a mild hyperparameter optimization. We sweep over the batch size and learning rate (which is particularly important as the simulation budgets differ by orders of magnitudes), the network size and the $\alpha$ parameter for the time prior defined in Section~\ref{subsec:t-weighting} (see Tab.~\ref{tab:sbibm-hyperparameters} for the specific values). We reserve 5\% of the simulation budget for validation and choose the model with the best validation loss across all configurations.

\subsection{Additional results}
We here provide various additional results for the SBI benchmark. First, we compare the performance of FMPE and NPE when using the Maximum Mean Discrepancy metric (MMD). The results can be found in Fig.~\ref{fig:benchmark-mmd}. FMPE shows superior performance to NPE for most tasks and simulation budgets. Compared to the C2ST scores in Fig.~\ref{fig:benchmark-c2st} the improvement shown by FMPE in MMD is more substantial.

Fig.~\ref{fig:benchmark-SMPE} compares the FMPE results with the optimal transport path from the main text with a comparable score matching model using the Variance Preserving diffusion path \cite{song2020score}. The score matching results were obtained using the same batch size, network size and learning rate as the FMPE network, while optimizing for $\beta_{\text{min}}\in \{0.1, 1, 4\}$ and $\beta_{\text{max}}\in \{ 4, 7, 10\}$. FMPE with the optimal transport path clearly outperforms the score-based model on almost all configurations. 

In Fig.~\ref{fig:benchmark-theta-embedding} we compare FMPE using the architecture proposed in Section~\ref{subsec:fmpe-arch} with $(t,\theta)$-conditioning via gated linear units to FMPE with a naive architecture operating directly on the concatenated $(t,\theta,x)$ vector. For the two displayed tasks the context dimension  $\dim(x) = 100$ is much larger than the parameter dimension $\dim(\theta) \in \{5, 10 \}$, and there is a clear performance gain in using the GLU conditioning. Our interpretation is that the low dimensionality of $(t, \theta)$ means that it is not well-learned by the network when simply concatenated with $x$.

Fig.~\ref{fig:sbibm-logprobs-all} displays the densities of the reference samples under the FMPE model as a histogram for all tasks (extended version of Fig.~\ref{fig:sbibm-logprobs}). The support of the learned model $q(\theta |x)$ covers the reference samples $\theta \sim p(\theta | x)$, providing additional empirical evidence for the mass-covering behavior theoretically explored in Thm.~\ref{theorem:mass-coverage}. However, samples from the true posterior distribution may have a small density under the learned model, especially if the deviation between model and reference is high; see Lotka-Volterra (bottom right panel). Fig.~\ref{fig:p-p} displays P--P plots for two selected tasks.

Finally,  we study the impact of our time prior re-weighting for one example task in Fig.~\ref{fig:ablation-time-prior}. We clearly see that our proposed re-weighting leads to increased performance by up-weighting samples for $t$ closer to $1$ during training.
\begin{table}
    \centering
    \caption{\label{tab:sbibm-hyperparameters}
        Sweep values for the hyperparamters for the SBI benchmark. We split the configurations according to simulation budgets, e.g.\ for 1000 simulations, we only swept over smaller values for network size and batch size. 
        The network architecture has a diamond shape, with increasing layer width from smallest to largest and then decreasing to the output dimension. Each block consists of two fully-connected residual layers.
    }
    \begin{tabular}{ll}
        \toprule
        hyperparameter & sweep values \\\midrule
        hidden dimensions & $2^n$ for $n \in \{4, \ldots, 10\}$\\
        number of blocks & $ 10, \ldots, 18$\\
        batch size & $2^n$ for $n \in \{2, \ldots, 9\}$\\
        learning rate & 1.e-3, 5.e-4, 2.e-4, 1.e-4\\
        $\alpha$ (for time prior) & -0.25, -0.5, 0, 1, 4 \\
        \bottomrule
    \end{tabular}
\end{table}

\begin{figure}
  \vspace{-5pt}
  \centering
  \includegraphics[width=\textwidth]{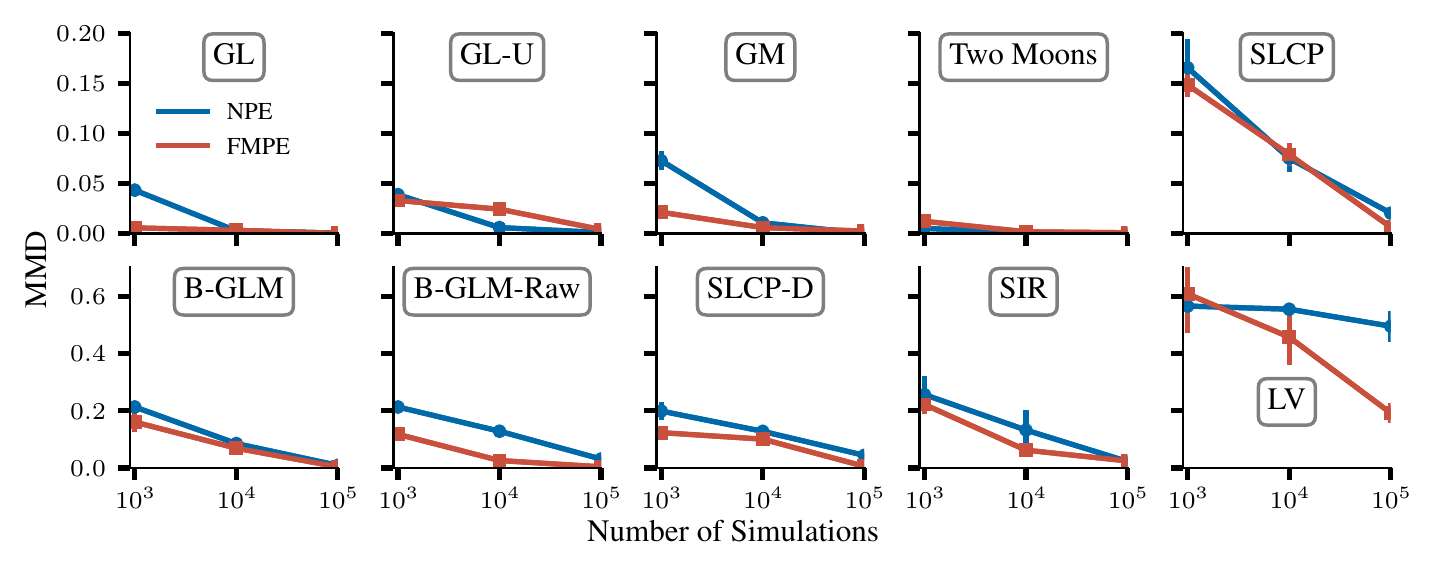}
  \vspace{-5pt}
  \caption{\label{fig:benchmark-mmd}
  Comparison of FMPE and NPE performance across 10 SBI benchmarking tasks~\cite{lueckmann2021benchmarking}. We here quantify the deviation in terms of the Maximum Mean Discrepancy (MMD) as an alternative metric to the C2ST score used in Fig.~\ref{fig:benchmark-c2st}. MMD can be sensitive to its hyperparameters~\cite{lueckmann2021benchmarking}, so we use the C2ST score as a primary performance metric.
  }
\end{figure}

\begin{figure}
  \vspace{-5pt}
  \centering
  \includegraphics[width=\textwidth]{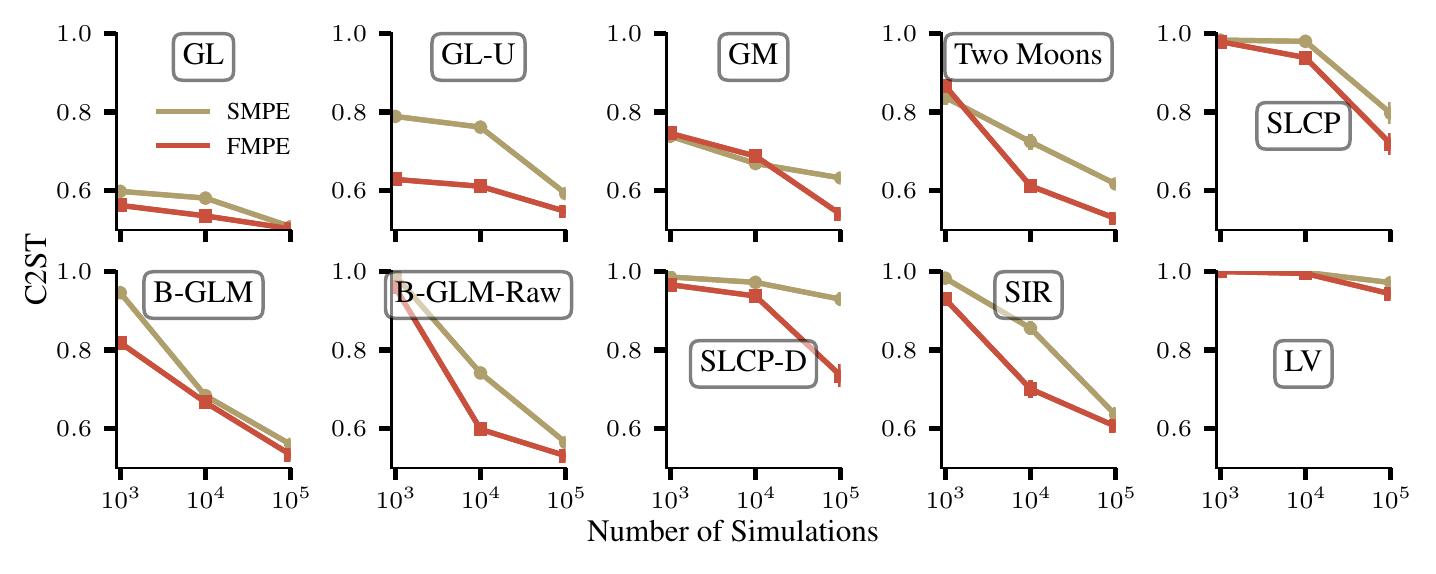}
  \vspace{-5pt}
  \caption{\label{fig:benchmark-SMPE}
  Comparison of FMPE with the optimal transport path (as used throughout the main paper) with comparable models trained with a Variance Preserving diffusion path \cite{song2020score} by regressing on the score (SMPE). 
  Note that the SMPE baseline shown here is not directly comparable to NPSE~\cite{sharrock2022sequential,geffner2022score}, as this method uses Langevin steps, which reduces the dependence of the results on the vector field for small $t$ (at the cost of a tractable density).
  }
\end{figure}

\begin{figure}
  \vspace{-5pt}
  \centering
  \includegraphics[width=\textwidth]{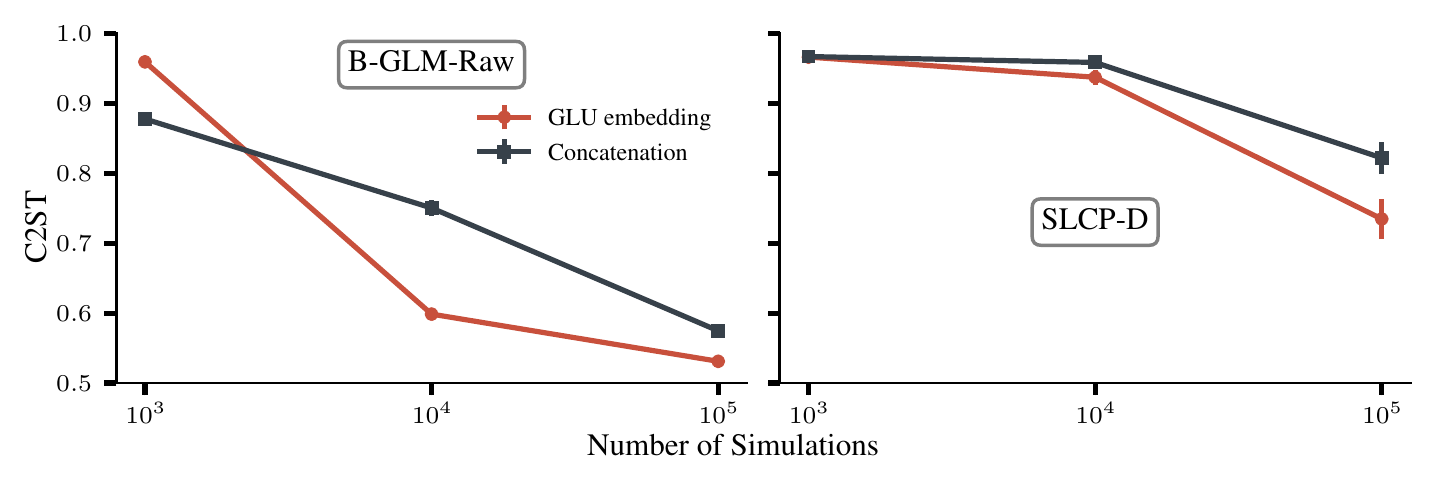}
  \vspace{-5pt}
  \caption{\label{fig:benchmark-theta-embedding}
  Comparison of the architecture proposed in Section~\ref{subsec:fmpe-arch} with gated linear units for the $(t,\theta)$-conditioning (red) and a naive architecture based on a simple concatenation of $(t, \theta, x)$ (black). FMPE with the proposed architecture performs substantially better.
  }
\end{figure}
\begin{figure}
  \centering
  \includegraphics[width=\textwidth]{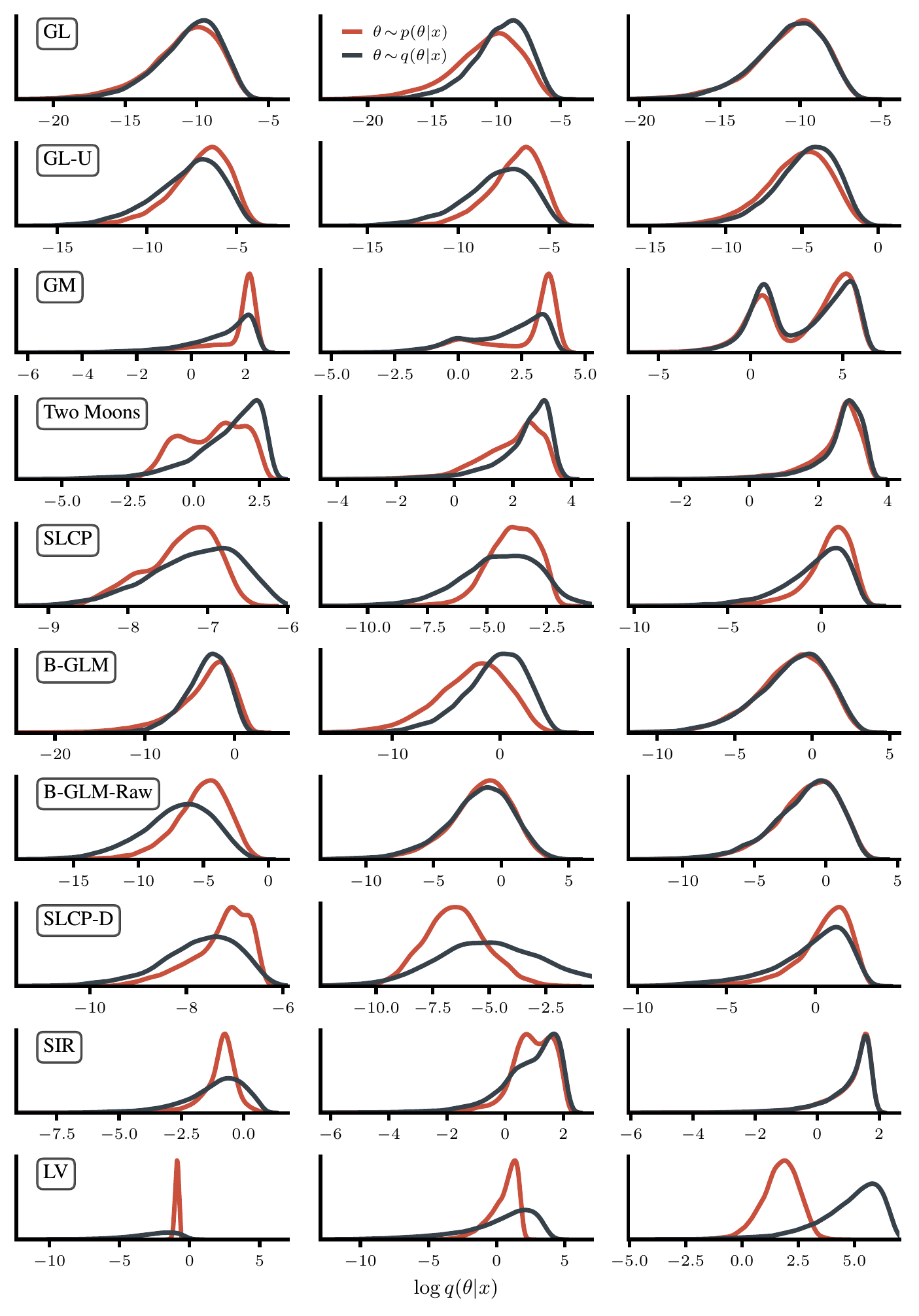}
  \caption{\label{fig:sbibm-logprobs-all}
    Histogram of FMPE densities $\log q(\theta|x)$ for samples
    $\theta\sim q(\theta|x)$ and reference samples $\theta\sim p(\theta|x)$ for simulation budgets $N=10^3$ (left), $N=10^4$ (center) and $N=10^5$ (right). The reference samples $\theta\sim p(\theta|x)$ are all within the support of the learned model $q(\theta|x)$, indicating mass covering FMPE results. Nonetheless, reference samples may have a small density under $q(\theta|x)$, if the validation loss is high, see Lotka-Volterra (LV).
  }
\end{figure}

\begin{figure}[ht]
  \centering
  \includegraphics[width=0.32\textwidth]{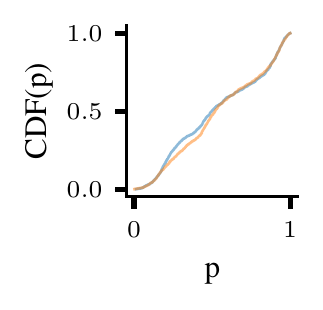}\hfill
  \includegraphics[width=0.32\textwidth]{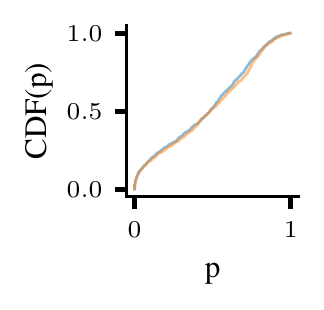}\hfill
  \includegraphics[width=0.32\textwidth]{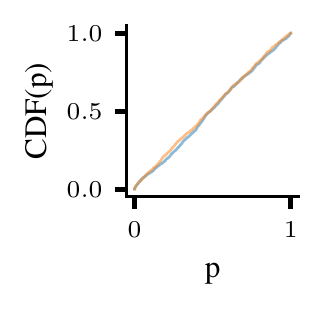}
  \includegraphics[width=0.32\textwidth]{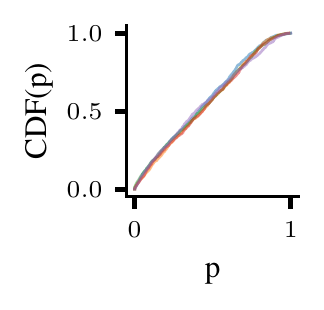}\hfill
  \includegraphics[width=0.32\textwidth]{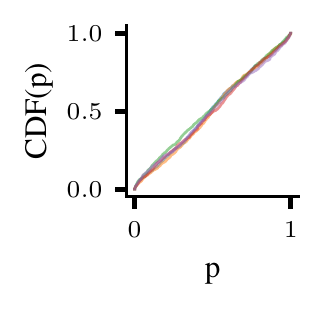}\hfill
  \includegraphics[width=0.32\textwidth]{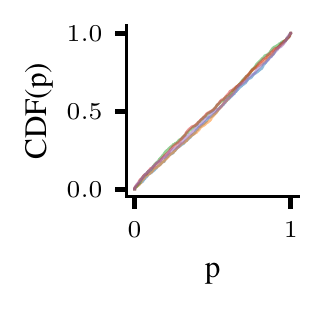}
  \caption{\label{fig:p-p}P-P plot for the marginals of the FMPE-posterior for the Two Moons (upper) and SLCP (lower) tasks for training budgets of 
  $10^3$ (left), $10^4$ (center), and $10^5$ (right) samples.}
\end{figure}
\begin{figure}[ht]
  \centering
  \includegraphics[width=0.5\textwidth]{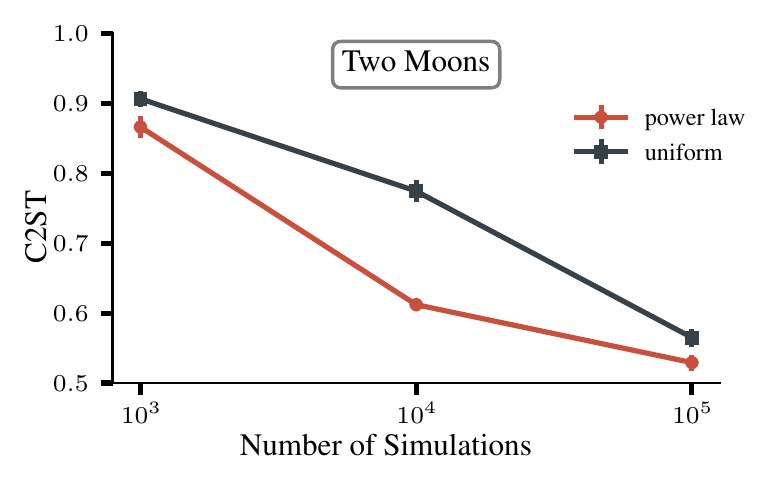}
  \caption{\label{fig:ablation-time-prior}Comparison of the time prior re-weighting proposed in Section~\ref{subsec:t-weighting} with a uniform prior over $t$ on the Two Moons task (Section~\ref{sec:sbibm}). The network trained with the re-weighted prior clearly outperforms the reference on all simulation budgets.}
\end{figure}

\FloatBarrier
\section{Gravitational-wave inference}
\label{sec:appendix-gw}
We here provide the missing details and additional results for the gravitational wave inference problem analyzed in Section~\ref{sec:gw}.
\subsection{Network architecture and hyperparameters}
\label{sec:appendix-architecture}

Compared to NPE with normalizing flows, FMPE allows for generally simpler architectures, since the output of the network is simply a vector field. This also holds for NPSE (model also defined by a vector) and NRE (defined by a scalar). Our FMPE architecture builds on the embedding network developed in~\cite{Dax:2021tsq}, however we extend the network capacity by adding more residual blocks (Tab.~\ref{tab:fmpe-gw-hyperparameters}, top panel). For the $(t,\theta)$-conditioning we use gated linear units applied to each residual block, as described in Section~\ref{subsec:fmpe-arch}. We also use a small residual network to embed $(t,\theta)$ before applying the gated linear units.

In this Appendix we also perform an ablation study, using the \emph{same} embedding network as the NPE network (Tab.~\ref{tab:fmpe-gw-hyperparameters}, bottom panel). For this configuration, we additionally study the effect of conditioning on $(t, \theta)$ starting from different layers of the main residual network.

\begin{table}
    \centering
    \caption{\label{tab:fmpe-gw-hyperparameters}
       Hyperparameters for the FMPE models used in the main text (top) and in the ablation study (bottom, see Fig.~\ref{fig:jsds-ablation}). The network is composed of a sequence of residual blocks, each consisting of two fully-connected hidden layers, with a linear layer between each pair of blocks. The ablation network is the same as the embedding network that feeds into the NPE normalizing flow.
    }
    \begin{tabular}{lp{8cm}}
        \toprule
        hyperparameter & values \\\midrule
        residual blocks & $2048$, $4096 \times 3$, $2048 \times 3$, $1024 \times 6$, $512 \times 8$, $256 \times 10$,\\
        &$128 \times 5$, $64 \times 3$, $32 \times 3$, $16 \times 3$ \\
        residual blocks $(t, \theta)$ embedding & $16, 32, 64, 128, 256$\\
        batch size & 4096\\
        learning rate & 5.e-4\\
        $\alpha$ (for time prior) & 1\\\midrule
        residual blocks & $2048 \times 2$, $1024 \times 4$, $512 \times 4$, $256 \times 4$, $128 \times 4$, $64 \times 3$,\\
        &$32 \times 3$, $16 \times 3$  \\
        residual blocks $(t, \theta)$ embedding & $16, 32, 64, 128, 256$\\
        batch size & 4096\\
        learning rate & 5.e-4\\
        $\alpha$ (for time prior) & 1\\\bottomrule
    \end{tabular}
\end{table}

\subsection{Data settings}
We use the data settings described in~\cite{Dax:2021tsq}, with a few minor modifications. In particular, we use the waveform model IMRPhenomPv2~\cite{Hannam:2013oca,Khan:2015jqa,Bohe:2016} and the prior displayed in Tab.~\ref{tab:GW-parameters-with-priors}.  Generation of the training dataset with 5,000,000 samples takes around 1 hour on 64 CPUs. Compared to~\cite{Dax:2021tsq}, we reduce the frequency range from $[20,1024]~\text{Hz}$ to $[20,512]~\text{Hz}$ to reduce the computational load for data preprocessing. We also omit the conditioning on the detector noise power spectral density (PSD) introduced in~\cite{Dax:2021tsq} as we evaluate on a single GW event. Preliminary tests show that the performance with PSD conditioning is similar to the results reported in this paper. All changes to the data settings have been applied to FMPE and the NPE baselines alike to enable a fair comparison. 
\begin{table}
    \centering
    \begin{tabular}{lll}
        \toprule
        Description & Parameter & Prior\\\midrule
        component masses & $m_1$, $m_2$ & $[10,120]~\mathrm{M}_\odot$, $m_1\geq m_2$ \\
        chirp mass & $M_c=(m_1 m_2)^{\frac{3}{5}} / (m_1 + m_2)^{\frac{1}{5}}$ & $[20,120]~\mathrm{M}_\odot$ (constraint)\\
        mass ratio & $q=m_2 / m_1$ & [0.125, 1.0] (constraint)\\
        spin magnitudes & $a_1$, $a_2$ & $[0,0.99]$\\ 
        spin angles & $\theta_1$, $\theta_2$, $\phi_{12}$, $\phi_{JL}$ & standard as in~\cite{Farr:2014qka}\\
        time of coalescence & $t_c$ & $[-0.03,0.03]$~s\\
        luminosity distance & $d_L$ & $[100,1000]$~Mpc\\
        reference phase & $\phi_c$ & $[0,2\pi]$\\
        inclination & $\theta_{JN}$ & $[0,\pi]$ uniform in sine \\
        polarization & $\psi$ & $[0,\pi]$ \\
        sky position & $\alpha, \beta$ & uniform over sky
        \\\bottomrule
    \end{tabular}
    \caption{
    \label{tab:GW-parameters-with-priors}
    Priors for the astrophysical binary black hole parameters. Priors are uniform over the specified range unless indicated otherwise. Our models infer the mass parameters in the basis $(M_c,q)$ and marginalize over the phase parameter $\phi_c$.
    }
\end{table}
\subsection{Additional results}\label{subsec:appendix-gw-results}
Tab.~\ref{tab:inference} displays the inference times for FMPE and NPE. NPE requires only a single network pass to produce samples and (log-)probabilities, whereas many forwards passes are needed for FMPE to solve the ODE with a specific level of accuracy. A significant portion of the additional time required for calculating (log-)probabilities in conjunction with the samples is spent on computing the divergence of the vector field, see Eq.~\eqref{eq:cnf-density}.

Fig.~\ref{fig:jsds-ablation} presents a comparison of the FMPE performance using networks of the same hidden dimensions as the NPE embedding network (Tab.~\ref{tab:fmpe-gw-hyperparameters} bottom panel). This comparison includes an ablation study on the timing of the $(t, \theta)$ GLU-conditioning. In the top-row network, the $(t, \theta)$ conditioning is applied only after the 256-dimensional blocks. In contrast, the middle-row network receives $(t, \theta)$ immediately after the initial residual block.
With FMPE we can achieve performance comparable to NPE, while having only  $\approx 1/3$ of the network size (most of the NPE network parameters are in the flow). This suggests that parameterizing the target distribution in terms of a vector field requires less learning capacity, compared to directly learning its density. Delaying the $(t, \theta)$ conditioning until the final layers impairs performance. However, the number of FLOPs at inference is considerably reduced, as the context embedding can be cached and a network pass only involves the few layers with the $(t, \theta)$ conditioning. Consequently, there's a trade-off between accuracy and inference speed, which we will explore in a greater scope in future work. 

\begin{table}[ht]
    \centering
    \caption{Inference times per batch for FMPE and NPE on a single Nvidia A100 GPU, using the training batch size of 4096. We solve the ODE for FMPE using the \texttt{dopri5} discretization \cite{DORMAND198019} with absolute and relative tolerances of 1e-7. For FMPE, generation of the (log-)probabilities additionally requires the computation of the divergence, see equation~\eqref{eq:cnf-density}. This needs additional memory and therefore limits the maximum batch size that can be used at inference.}
    \label{tab:inference}
    \begin{tabular}{@{}lrrr@{}}
    \toprule
        & Network Passes &  Inference Time (per batch) \\
    \midrule
   FMPE (sample only) &248  & 26s \\[0.5ex]
   FMPE (sample and log probs)  &350  &  352s\\[0.5ex]
    \midrule
   NPE (sample and log probs) & 1 & 1.5s\\[0.5ex]
    \bottomrule
    \end{tabular}
\end{table}

\begin{figure}[ht]
  \vspace{-5pt}
  \centering
  \includegraphics[width=\textwidth]{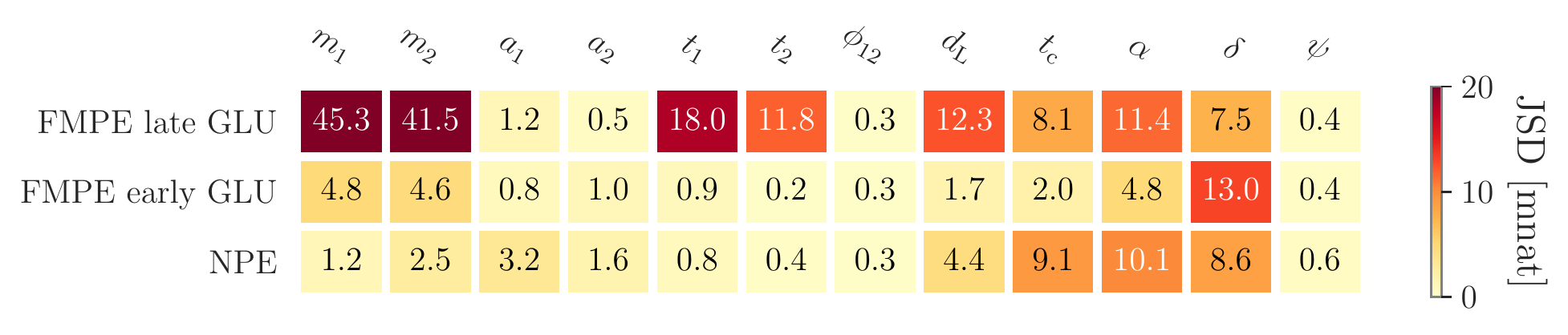}
  \vspace{-5pt}
  \caption{\label{fig:jsds-ablation} Jensen-Shannon divergence between inferred posteriors and the reference posteriors for GW150914 \cite{Abbott:2016blz}. We compare two FMPE models with the same architecture as the NPE embedding network, see Tab.~\ref{tab:fmpe-gw-hyperparameters} bottom panel. For the model in the first row, the GLU conditioning of $(\theta, t)$ is only applied before the final 128-dim blocks. The model in the middle row is given the context after the very first 2048 block.}
\end{figure}

\end{document}